\documentclass[twoside]{article}

\usepackage[preprint]{aistats2027}

\usepackage[round]{natbib}

\usepackage[utf8]{inputenc}
\usepackage[T1]{fontenc}
\usepackage{graphicx}
\usepackage{dblfloatfix} % two-column float placement (table*, figure*)
\usepackage[breaklinks,hidelinks,hypertexnames=false]{hyperref}
\usepackage{url}            % simple URL typesetting
\usepackage{booktabs}       % professional-quality tables
\usepackage{amsfonts}       % blackboard math symbols
\usepackage{nicefrac}       % compact symbols for 1/2, etc.
\usepackage{xcolor}
\usepackage{placeins}
\usepackage{float}

\usepackage{needspace}
\makeatletter
\def\section{\@startsection{section}{1}{\z@}{-2.0ex plus -0.5ex minus -.2ex}{1.3ex \@plus 0pt \@minus .2ex}{\large\bfseries\raggedright\MakeUppercase}}
\makeatother

\usepackage{multirow}
\usepackage{enumitem}
\setlist[itemize]{nosep,leftmargin=*,topsep=2pt,partopsep=0pt}
\usepackage{amsmath}
\usepackage{amsthm}
\newtheorem{theorem}{Theorem}
\newtheorem{lemma}[theorem]{Lemma}
\newtheorem{proposition}[theorem]{Proposition}
\usepackage{amssymb} % math symbols (e.g., \gtrsim)
\usepackage{mathtools} % \coloneqq and related math tools
\newcommand{\R}{\mathbb{R}}

\newcommand{\struct}{\mathrm{struct}}

\newcommand{\softmax}{\mathrm{softmax}}
\newcommand{\relu}{\mathrm{ReLU}}

\newcommand{\forecast}{\mathcal{L}_{\mathrm{forecast}}}
\newcommand{\graphloss}{\mathcal{L}_{\mathrm{graph}}}

\newcommand{\totalloss}{\mathcal{L}_{\mathrm{total}}}

\newcommand{\tem}{\textsc{TEM}}
\newcommand{\Etem}{E_{\mathrm{TEM}}}
\newcommand{\alignloss}{\mathcal{L}_{\mathrm{align}}}

\newlength{\casefigH}

\begin{document}

\runningtitle{Structure-Aware Graph Abstention for Reliable Selective Forecasting}
\runningauthor{Xie and Alsinglawi}

\twocolumn[

\aistatstitle{Structure-Aware Graph Abstention for Reliable Selective Forecasting}

\aistatsauthor{ Jianxiang Xie \And Belal Alsinglawi }

\aistatsaddress{ University of New South Wales \And Zayed University } ]

\begin{abstract}
Selective forecasting abstains on high-risk test windows under a retained-coverage budget. Existing gates such as \tem{} \citep{brusokas2025time} score each forecast as a whole; for multivariate outputs, trajectories can look plausible while violating dependencies among variables. We treat \emph{instance-level plausibility} and \emph{relational consistency} as distinct reliability axes and operationalize the latter via a learned sparse graph and a Dirichlet-style structural energy $E_{\struct}(\hat{Y},A)$, trained with error-weighted graph regularization and score--error alignment (Section~\ref{sec:method}). On seven long-horizon benchmarks and four backbones, structural gating often reduces selective MSE versus \tem{} at matched coverage, with the largest gains where cross-variable structure appears more informative in our benchmarks; gains are not universal, indicating a complementary abstention signal. Table~\ref{tab:selective_mse_main} is a Protocol~A ranking diagnostic (seed~2024); three-seed deployable Protocol~B on an aligned subset is in Table~\ref{tab:multiseed_protocol_b} (full validation$\to$test grids: Appendix~\ref{sec:appendix_supp_ablation}).
\end{abstract}

\section{Introduction}
\emph{Selective forecasting} adds an abstention layer on top of point prediction: under a retained-coverage budget, the system withholds forecasts scored as high risk, trading how often predictions are released for lower error on those that are released. We study \emph{coverage-constrained score gating} (lower abstention score $\Rightarrow$ more likely accepted), using scalar scores as selective-risk proxies rather than calibrated probabilities.

Most selective rules score each forecast \emph{as a whole}. Time--energy models (\tem{}) \citep{brusokas2025time}---our primary comparator, not our contribution---train an energy-based model on decoder trajectories after a deep forecaster and abstain on high instance energy. This is insufficient for multivariate outputs: each channel can look plausible while the joint future violates dependencies seen in training \citep{wu2020connecting,chen2022balanced,benidis2022forecasting}. \textbf{Selective forecasting for structured outputs therefore faces two questions: is the forecast plausible, and is it relationally consistent across variables?} Instance-level plausibility and relational consistency are distinct notions of reliability; \tem{} targets the former, while we explicitly score the latter.

We call violation of learned cross-variable structure \emph{structural deviation} (high structural energy; Section~\ref{sec:method}). Coupled load and weather panels illustrate the failure mode: co-evolving channels can drift out of sync on the horizon while marginal traces stay benign. Relational gating is most informative when cross-channel coupling is stable (Electricity, Weather, Traffic) and less so when instance error dominates or coupling is weak (Section~\ref{sec:experiments}).

We present SASF (\emph{structure-aware selective forecasting}): a lightweight adaptive graph head yields adjacency $A$ and structural energy $E_{\struct}(\hat{Y},A)$; auxiliary losses make $E_{\struct}$ rank selective risk (Section~\ref{sec:method}). Prior graph work mostly improves $\hat{Y}$ itself; we use dependencies as a \emph{decision signal} for abstention (Section~\ref{sec:related_work}). We compare graph-only $E_{\struct}$ to instance-level $\Etem$ under matched coverage (Section~\ref{sec:experiments}; deployable validation$\to$test thresholds in Appendix~\ref{sec:appendix_supp_ablation}).

\paragraph{Contributions.}
\begin{itemize}
\item We identify a failure mode of instance-level selective forecasting: forecasts can be individually plausible yet structurally deviant across variables.
\item We operationalize \emph{relational consistency} through learned $A$ and $E_{\struct}$, with error-weighted graph learning and score--error alignment so structural deviation predicts selective risk.
\item Empirically, relational gating helps most where cross-variable dependence is more informative for abstention and on selected backbones, while instance-level scores remain competitive elsewhere---relational abstention complements rather than replaces \tem{} (Tables~\ref{tab:selective_mse_main}--\ref{tab:ablation2_main}; multiseed deployable Protocol~B in Table~\ref{tab:multiseed_protocol_b} and Appendix~\ref{sec:appendix_supp_ablation}).
\end{itemize}

\paragraph{Scope.}
We evaluate on public forecasting benchmarks, not deployed cyber-physical testbeds, and do not claim formal safety verification, conformal guarantees, or anomaly-detection certification.

\Needspace{4\baselineskip}
\section{Related Work}\label{sec:related_work}
Strong multivariate forecasters span decomposition Transformers and modern attention designs \citep{wu2021autoformer,zhou2022fedformer,nie2022time,wu2022timesnet,zhou2021informer,zeng2023transformers}, typically optimized for average accuracy \citep{benidis2022forecasting}.
Selective prediction studies coverage--risk tradeoffs via abstention \citep{geifman2017selective,cortes2016learning}; conformal and ensemble methods target uncertainty for predictive sets or calibrated sets \citep{vovk2005algorithmic,romano2019conformalized,angelopoulos2021gentle,gal2016dropout,lakshminarayanan2017simple,gneiting2007strictly}. We instead compare learned \emph{scalar} abstention scores under explicit selective protocols (Section~\ref{sec:experiments}) and adopt the instance-energy \tem{} pipeline \citep{brusokas2025time,lecun2006tutorial,hyvarinen2005estimation} as baseline.

Prior work thus falls into two families: (i)~\emph{instance-level} selective scores over whole forecasts, and (ii)~graph structure used to improve point forecasts \citep{wu2020connecting,liu2022multivariate,chen2022balanced,chen2023multi,cai2024forecastgrapher,kim2023hierarchical,sriramulu2023adaptive,zhang2024irregular,yu2017spatio,kipf2017semi}. We use learned dependencies to decide \textbf{whether a forecast should be released}---the graph is a relational abstention signal, not solely a predictive inductive bias. Section~\ref{sec:method} instantiates this under retained-coverage constraints.

\section{Problem Formulation}
Selective forecasting is a \emph{risk-aware decision} layer on top of point forecasting: after producing $\hat{Y}$, the system decides whether to \emph{release} a forecast or abstain, trading retained coverage (how often predictions are issued) against error on released predictions.
This objective is complementary to fitting $f_\theta$: we choose an acceptance rule $g$ that minimizes \emph{post-abstention} risk---average loss on retained samples---subject to a nominal retained-coverage (equivalently, abstention-rate) constraint.

Let $X \in \mathbb{R}^{B \times L \times D}$ denote a batch of historical multivariate sequences, where $B$ is the batch size, $L$ is the look-back window, and $D$ is the number of variables. The forecasting task is to predict future trajectories
\[
Y \in \mathbb{R}^{B \times H \times D},
\]
with prediction horizon $H$. A forecasting model $f_\theta$ produces
\[
\hat{Y} = f_\theta(X) \in \mathbb{R}^{B \times H \times D}.
\]

Given ground-truth $Y$, we define the per-sample squared error that underlies selective risk:
\begin{equation}
e_b = \frac{1}{HD}\sum_{h=1}^{H}\sum_{d=1}^{D} (\hat{Y}_{b,h,d} - Y_{b,h,d})^2.
\label{eq:mse}
\end{equation}

An acceptance function (selective decision rule)
\[
g(X_b, \hat{Y}_b) \in \{0,1\}
\]
determines whether a prediction is retained ($g=1$) or rejected ($g=0$). Let $\mathcal{S} = \{b : g(X_b,\hat{Y}_b)=1\}$ denote the set of accepted samples. Empirical retained coverage and selective risk are
\begin{equation}
\mathrm{cov} = \frac{|\mathcal{S}|}{B}, 
\qquad
\mathrm{risk} = \frac{1}{|\mathcal{S}|} \sum_{b \in \mathcal{S}} e_b \quad (\text{when } |\mathcal{S}|>0).
\label{eq:rc}
\end{equation}

Given a target coverage level $c \in (0,1]$, we seek a rule $g$ that minimizes selective risk under an approximate retained-coverage constraint:
\begin{equation}
\min_{g} \ \mathrm{risk}
\quad \text{s.t.} \quad
\mathrm{cov} \approx c.
\label{eq:objective}
\end{equation}
We instantiate $\mathrm{cov}\approx c$ via the ranking and transfer protocols in Section~\ref{sec:experiments} (target retained fraction $\rho$, with $c=\rho$). The central question is not how to improve $f_\theta$, but how to score multivariate reliability when instance-level plausibility can miss structural deviation. We compare (i)~\tem{} energy $\Etem$ (instance-level, prior pipeline) and (ii)~structural energy $E_{\struct}$ (relational, Section~\ref{sec:method}).

\section{Method}
\label{sec:method}
% =========================

\subsection{Overview}

SASF augments backbone $f_\theta$ with (1) our graph head ($A$, $E_{\struct}$) and (2) the prior \tem{} branch ($\Etem$, for comparison). Selective decisions are made from the abstention scores according to the evaluation protocols in Section~\ref{sec:experiments}. The graph head is trained jointly with $f_\theta$.

\begin{figure*}[!t]
  \centering
  \includegraphics[width=\textwidth,trim=0 0 6 0,clip]{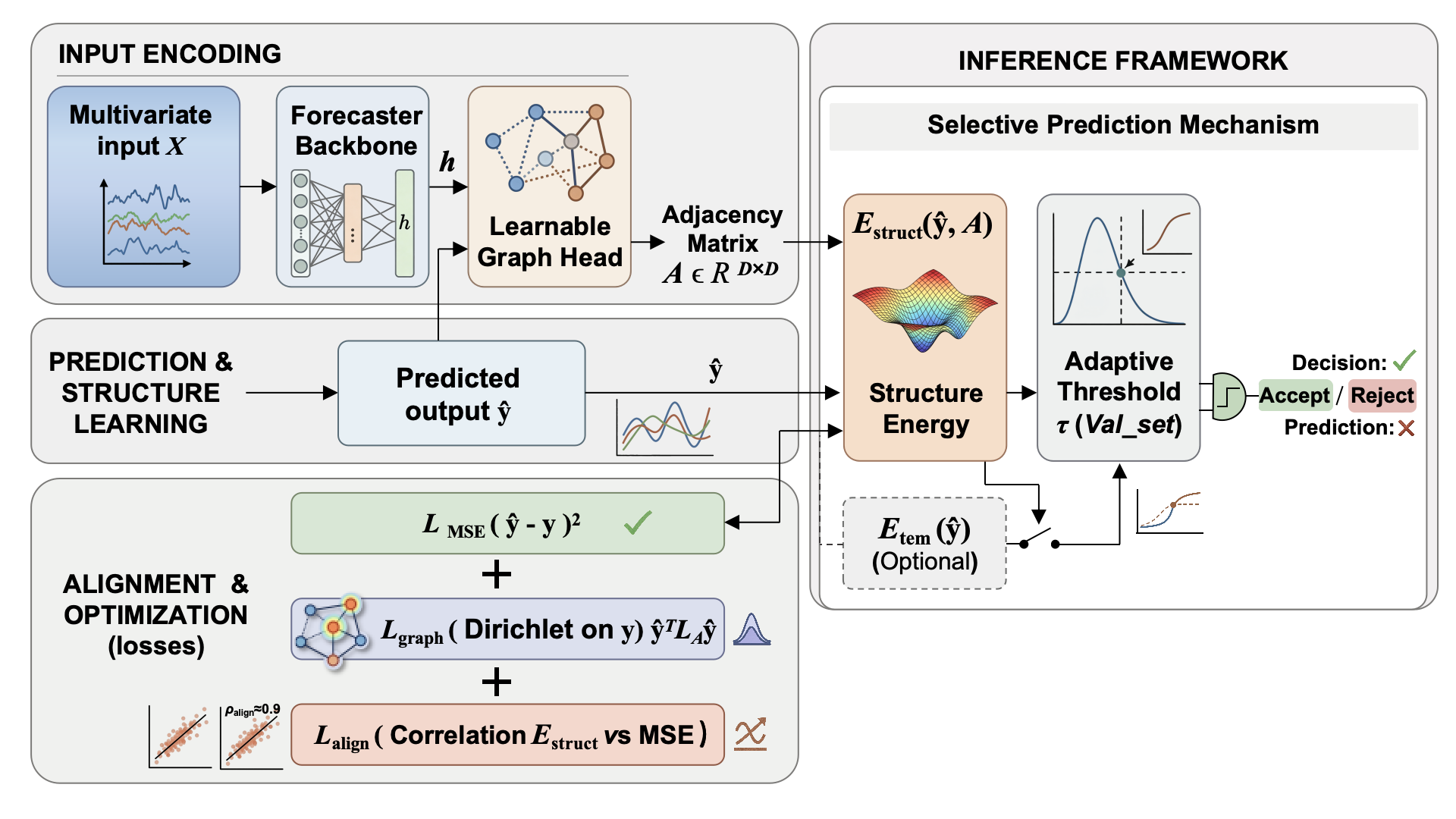}
  \caption{Overview: backbone $\hat{Y}$; baseline \tem{} energy $\Etem$ (prior pipeline); our graph head yields $A$ and $E_{\struct}(\hat{Y},A)$. Under matched test-split quantile gating, we obtain comparable selective evaluations for graph-only vs.\ \tem{}-only gates.}
  \label{fig:sasf-overview}
\end{figure*}

\subsection{Why relational energy?}

Instance-level gates can accept trajectories that look plausible in isolation yet break \emph{relational consistency}: $\Etem$ can be small while predicted co-evolution violates nominal dependencies in $A$. $E_{\struct}$ scores such \emph{structural deviation} for abstention, complementing $\Etem$. Section~\ref{sec:case_study} visualizes \emph{relational misses} ($\Etem$ accepts, $E_{\struct}$ rejects).

\subsection{Adaptive sparse graph head}

Learned $A$ encodes which variables co-evolve under the training distribution. The parameterization is intentionally lightweight: we use a simple \emph{global} graph not to maximize graph expressivity, but to isolate whether relational structure itself can serve as an abstention signal (rather than to advance graph-based forecasting). We use learnable node embeddings $U\in\R^{D\times d_g}$, affinities $M=\relu(UU^\top)$, and row-wise top-$k$ sparsification when $D{>}10$ ($k{=}10$), then row softmax
\begin{equation}
A = \softmax_{\mathrm{row}}(M_{\mathrm{sparse}}).
\end{equation}
For small $D$, we omit sparsification to avoid over-pruning. If no adaptive head is trained, one may instead build $A$ from training-set channel correlations (absolute Pearson), with the same top-$k$ and row normalization---used as a fallback in analysis scripts.

\subsection{Dirichlet-style structural energy}

If variables $i$ and $j$ are strongly connected in $A$, their predicted trajectories should co-evolve; we penalize large horizon disagreement on weighted edges (Dirichlet-style graph energy; Appendix~\ref{sec:appendix_theory_energy}). Given forecast batch $\hat{Y}\in\R^{B\times H\times D}$, fix $b\in\{1,\ldots,B\}$ and form pairwise squared gaps
\begin{equation}
\mathrm{dist}_{b,i,j}
=
\sum_{h=1}^{H}
\left(
\hat{Y}_{b,h,i} - \hat{Y}_{b,h,j}
\right)^2,
\end{equation}
and define the \emph{per-example} structural energy (a scalar abstention score for sample $b$) as this graph smoothness functional,
\begin{equation}
E_{\struct}(\hat{Y}_b,A)
=
\frac{1}{D^2}\sum_{i=1}^{D}\sum_{j=1}^{D}
A_{ij}\,\mathrm{dist}_{b,i,j}.
\label{eq:estruct}
\end{equation}
Large $E_{\struct}(\hat{Y}_b,A)$ indicates \emph{structural deviation} from relational consistency under $A$. At test time we sort $\{E_{\struct}(\hat{Y}_b,A)\}_{b=1}^B$ (written $E_{\struct}(\hat{Y},A)$ when batch-index dependence is clear). $A$ is a single global unsigned graph, so the score captures smoothness-based deviation rather than lags or sample-specific structure.

\paragraph{Theoretical interpretation.}
Although $A$ may be asymmetric after row-wise sparsification and normalization, the pairwise energy in Eq.~\eqref{eq:estruct} depends only on the symmetrized weights $W=(A+A^\top)/2$ (Lemma~\ref{lem:symmetrization}, Appendix~\ref{sec:appendix_theory_sym}). If $E_{\struct}(Y_b,A)\le\epsilon$ and $\lambda_{\max}(L_W)>0$, Theorem~\ref{thm:error_certificate} implies a one-sided certificate on the same per-sample selective error $e_b$ (Eq.~\eqref{eq:mse}):
\begin{equation}
\begin{aligned}
e_b &\ge \frac{D}{2H\,\lambda_{\max}(L_W)}
\Big[\sqrt{E_{\struct}(\hat{Y}_b,A)}-\sqrt{\epsilon}\Big]_+^{2}, \\
&\qquad [x]_+=\max(x,0).
\end{aligned}
\label{eq:cert_eb_main}
\end{equation}
Thus, under graph-smooth ground truth, large $E_{\struct}(\hat{Y}_b,A)$ certifies non-negligible $e_b$ (not an exact MSE predictor). Appendix~\ref{sec:appendix_theory} links score ordering to selective risk for a fixed acceptance set.

\subsection{Training objectives}

The auxiliary losses play complementary roles: $\graphloss$ learns \emph{what relational consistency looks like} on ground-truth futures, whereas $\alignloss$ teaches $E_{\struct}(\hat{Y},A)$ \emph{when structural deviation predicts forecast error}.

\paragraph{Forecasting loss.}
\begin{equation}
\forecast = \frac{1}{B}\sum_{b=1}^{B} e_b.
\label{eq:forecast_loss}
\end{equation}

\paragraph{Error-weighted graph regularization.}
Let $Y$ be the ground-truth future and $\hat{Y}$ the model prediction. For each batch sample $b$, let $e_b$ be the per-sample MSE from Eq.~\eqref{eq:mse} and $\bar{e}=\frac{1}{B}\sum_b e_b$. We use normalized weights $w_b=\mathrm{clip}(e_b/\bar{e},\,0.1,\,10)$. With learned $A$, we compute structural energy on $Y$ (same formula as Eq.~\eqref{eq:estruct} with $Y$ in place of $\hat{Y}$) and define
\begin{equation}
\graphloss = \frac{1}{B}\sum_{b=1}^{B} w_b\, E_{\struct}(Y_b, A).
\label{eq:graph_reg}
\end{equation}
Weighting by $w_b$ emphasizes structural patterns in difficult forecasting regions, so $A$ is shaped more strongly where selective decisions matter. $\graphloss$ is computed on $Y$ (not $\hat{Y}$) to learn nominal dependencies.

\paragraph{Alignment loss.}
To encourage $E_{\struct}(\hat{Y},A)$ to preserve the risk ordering used in selective prediction, we optimize a differentiable score--risk association surrogate: batch Pearson correlation between $s_b=E_{\struct}(\hat{Y}_b,A)$ and targets $t_b=\log(1{+}e_b)$ (or $t_b=e_b$). Concretely, $\alignloss = -\operatorname{Corr}(s,t)$ (Eq.~\eqref{eq:pearson_batch}, Appendix~\ref{sec:appendix_alignment_impl}); this does not directly optimize permutation agreement, but is consistent with oracle ranking under approximate score--risk alignment (Proposition~\ref{prop:pairwise_stability}, Appendix~\ref{sec:appendix_theory_alignment}). The joint objective is
\begin{equation}
\totalloss
=
\forecast
+
\lambda_g \graphloss
+
\lambda_a \alignloss,
\label{eq:total_loss}
\end{equation}
with hyperparameters $\lambda_g,\lambda_a$ (defaults in our implementation: $\lambda_g{=}0.1$, $\lambda_a{=}0.05$). Appendix~\ref{sec:appendix_theory} explains when $E_{\struct}$ can track risk under graph smoothness; it does not imply universally optimal selective ranking.

\subsection{\texorpdfstring{\tem{}}{TEM} baseline (prior work)}

We instantiate \tem{} following \citet{brusokas2025time}: an EBM on decoder trajectories trained jointly with $f_\theta$ and the graph head (same splits and early stopping). At inference, $\Etem$ scores each forecast; graph and \tem{} gates use raw energies under identical rules (Section~\ref{sec:experiments}).

\section{Experiments}
\label{sec:experiments}

We compare graph-only $E_{\struct}$ and \tem{}-only $\Etem$ gates (raw per-example scores, no z-score before sorting). \textbf{Protocol~A} fits test-split quantile thresholds (ranking diagnostic; Table~\ref{tab:selective_mse_main}). \textbf{Protocol~B} calibrates on validation and applies thresholds to test (deployable; Table~\ref{tab:selective_mse_vali_test}, Appendix~\ref{sec:appendix_supp_ablation}). Simple instance-level baselines (MC dropout, an error predictor) are in Appendix~\ref{sec:appendix_additional_baselines}; graph vs.\ \tem{} remains the primary axis comparison because both gates are trained in the same SASF pipeline under identical selective rules.

\paragraph{Why graph-only vs.\ \tem{}-only (not fusion).}
We report separate gates as the primary comparison because our goal is to test whether relational consistency adds an \emph{independent} abstention signal beyond instance-level energy. A fused score may improve practical performance but would confound this question by blending two axes whose contributions we aim to isolate. Score fusion (e.g., validation z-scoring before blending) is supported in the repository as an optional downstream extension, not as the core empirical claim.

\subsection{Datasets and backbones}

Seven multivariate long-horizon benchmarks (ETTh1/2, ETTm1/2, Electricity, Traffic, Weather) with standard splits. Backbones: Autoformer, FEDformer, PatchTST, TimesNet \citep{wu2021autoformer,zhou2022fedformer,nie2022time,wu2022timesnet}. The graph head and \tem{} branch train jointly with $\lambda_g{=}0.1$, $\lambda_a{=}0.05$, top-$k{=}10$ when $D{>}10$. The graph head is lightweight in parameter count (embedding matrix plus sparsified $A$); top-$k$ row sparsity limits relational scoring to a small neighborhood per variable, so runtime remains dominated by the forecaster backbone (Appendix~\ref{sec:appendix_compute}).

\subsection{Protocols and metrics}

\paragraph{Protocol~B (deployable).}
Thresholds are validation $\rho$-quantiles of each score, applied to test without re-calibration; selective MSE is mean $e_b$ (Eq.~\eqref{eq:mse}) over accepted points. \textit{N/A} marks zero test acceptances under split shift.

\paragraph{Protocol~A (diagnostic; Table~\ref{tab:selective_mse_main}).}
Test-split quantiles hold nominal coverage by construction, isolating ranking under matched retention. The Full column is full-coverage test MSE. Tables~\ref{tab:selective_mse_main}--\ref{tab:ablation2_main} use raw \texttt{ebm\_only}/\texttt{graph\_only} energies (fusion variants in the repository).

\paragraph{Ranking view.}
Under Protocol~A, at nominal coverage $\rho$, the $\lfloor \rho B \rfloor$ test samples with lowest abstention scores are retained by construction (Eq.~\eqref{eq:objective}). Under Protocol~B, the threshold is the validation $\rho$-quantile of each score, applied to test without re-calibration; realized test coverage can differ from $\rho$ under split shift (including zero acceptances, \textit{N/A}). In both cases, selective risk on the accepted set depends on how well scores separate lower- from higher-error samples $e_b$ (Eq.~\eqref{eq:mse}). For a fixed realized acceptance set of size $k$, sorting by the true error oracle minimizes empirical selective risk (Theorem~\ref{thm:oracle_ranking}, Appendix~\ref{sec:appendix_theory_ranking}). Training encourages $E_{\struct}$ to align with error proxies via $\alignloss$ (Section~\ref{sec:method}); mis-ranked acceptances bound excess risk at size $k$ (Theorem~\ref{thm:excess_risk}).

\noindent\textbf{I\%}$\uparrow$ compares graph vs.\ \tem{} at the same nominal target $\rho$: $(\mathrm{TEM}-\mathrm{Graph})/\mathrm{TEM}\times 100$. \textbf{S\%}$\uparrow$ compares graph vs.\ full coverage: $(\mathrm{Full}-\mathrm{Graph}_{\rho})/\mathrm{Full}\times 100$. Table~\ref{tab:ablation2_main} uses the same relative-improvement convention $(1-\mathrm{MSE}_{\mathrm{sel}}/\mathrm{MSE}_{\mathrm{orig}})\times 100$ with \texttt{tau\_mode}=\texttt{quantile}; Full (A0) aligns with \textbf{S\%}$\uparrow$ in Table~\ref{tab:selective_mse_main}. Appendix~\ref{sec:appendix_supp_ablation} (Table~\ref{tab:ablation_main}) reports a complementary validation$\to$test threshold protocol that need not match every cell numerically.

\subsection{TEM vs.\ graph energy (main results)}
\label{sec:tem_vs_graph}

Table~\ref{tab:selective_mse_main} summarizes \emph{Protocol~A} selective MSE for \tem{}-only vs.\ graph-only gates; deployable Protocol~B results are in Table~\ref{tab:selective_mse_vali_test}.

\paragraph{Findings.}
\textbf{(F1) Coupling.} Graph-only gates improve most on Weather, Electricity, and Traffic---benchmarks where cross-variable structure appears more informative for abstention in our setting---often on Autoformer, FEDformer, and PatchTST (positive \textbf{I\%}/\textbf{S\%} in Table~\ref{tab:selective_mse_main}).
\textbf{(F2) Complementarity.} TimesNet and several ETT splits favor \tem{}; these failures are informative: relational abstention targets structural risk and should not dominate when errors are primarily marginal.
\textbf{(F3) Transfer.} Protocol~B trends align with Protocol~A on overlapping cells but include split-shift \textit{N/A} cells (Appendix~\ref{sec:appendix_supp_ablation}). Risk--coverage curves: Appendix~\ref{sec:appendix_backbone_dataset_curves}.

% Unified selective MSE: TEM vs Graph (\texttt{tab:selective\_mse\_main}).
\begin{table*}[!t]
  \centering
  \footnotesize
  \setlength{\tabcolsep}{3.8pt}%
  \renewcommand{\arraystretch}{1.12}%
  \caption{\textbf{(Protocol~A; ranking diagnostic.)} Test selective prediction MSE across datasets and backbones (lower is better). Thresholds are \textbf{test-split} empirical $\rho$-quantiles of each raw score (\tem{} vs.\ structural), not validation-calibrated deployment thresholds (Protocol~B: Table~\ref{tab:selective_mse_vali_test}). Graph is bolded when lower than TEM at the same $\rho$. \textbf{I\%}$\uparrow$: $(\text{TEM}-\text{Graph})/\text{TEM}\times 100$. \textbf{S\%}$\uparrow$: $(\text{Full}-\text{Graph}_{\rho})/\text{Full}\times 100$. Positive is better for both.}
  \label{tab:selective_mse_main}
  \resizebox{\textwidth}{!}{%
  \begin{tabular}{@{}ll@{\hspace{3pt}}rrrr@{\hspace{9pt}}rrrr@{\hspace{9pt}}rrrr@{\hspace{11pt}}r@{}}
  \toprule
  Dataset & Backbone & \multicolumn{4}{c}{$\rho=0.5$} & \multicolumn{4}{c}{$\rho=0.7$} & \multicolumn{4}{c}{$\rho=0.9$} & \multirow{2}{*}{\textbf{Full}} \\
  \cmidrule(lr){3-6} \cmidrule(lr){7-10} \cmidrule(lr){11-14}
   &  & TEM & Graph & \textbf{I\%}$\uparrow$ & \textbf{S\%}$\uparrow$ & TEM & Graph & \textbf{I\%}$\uparrow$ & \textbf{S\%}$\uparrow$ & TEM & Graph & \textbf{I\%}$\uparrow$ & \textbf{S\%}$\uparrow$ \\
  \midrule
  ETTh1 & Autoformer & 0.4453 & \textbf{0.4209} & +5.5 & +4.1 & 0.4403 & \textbf{0.4091} & +7.1 & +6.8 & 0.4388 & \textbf{0.4250} & +3.1 & +3.1 & 0.4388 \\
   & FEDformer & 0.4030 & \textbf{0.3893} & +3.4 & +3.9 & 0.4063 & \textbf{0.3828} & +5.8 & +5.5 & 0.4051 & \textbf{0.3956} & +2.3 & +2.3 & 0.4049 \\
   & PatchTST & 0.3837 & \textbf{0.3547} & +7.6 & +7.6 & 0.3824 & \textbf{0.3650} & +4.6 & +4.9 & 0.3839 & \textbf{0.3780} & +1.5 & +1.5 & 0.3838 \\
   & TimesNet & 0.3716 & \textbf{0.3484} & +6.2 & +10.2 & 0.3740 & \textbf{0.3685} & +1.5 & +5.0 & 0.3862 & 0.3916 & -1.4 & -1.0 & 0.3879 \\
  ETTh2 & Autoformer & 0.2587 & \textbf{0.2448} & +5.4 & +18.7 & 0.2665 & 0.2685 & -0.8 & +10.8 & 0.2882 & 0.2907 & -0.9 & +3.5 & 0.3011 \\
   & FEDformer & 0.2703 & \textbf{0.2452} & +9.3 & +22.4 & 0.3025 & \textbf{0.2603} & +14.0 & +17.6 & 0.3143 & \textbf{0.2924} & +7.0 & +7.4 & 0.3159 \\
   & PatchTST & 0.2597 & \textbf{0.1951} & +24.9 & +24.1 & 0.2595 & \textbf{0.2021} & +22.1 & +21.3 & 0.2615 & \textbf{0.2358} & +9.8 & +8.2 & 0.2569 \\
   & TimesNet & 0.2493 & 0.3313 & -32.9 & -26.0 & 0.2534 & 0.3173 & -25.2 & -20.7 & 0.2552 & 0.2828 & -10.8 & -7.6 & 0.2629 \\
  ETTm1 & Autoformer & 0.4509 & \textbf{0.3928} & +12.9 & +11.0 & 0.4435 & \textbf{0.4342} & +2.1 & +1.6 & 0.4390 & \textbf{0.4374} & +0.4 & +0.8 & 0.4411 \\
   & FEDformer & 0.4724 & \textbf{0.3980} & +15.7 & +11.7 & 0.4678 & \textbf{0.4326} & +7.5 & +4.1 & 0.4615 & \textbf{0.4448} & +3.6 & +1.4 & 0.4509 \\
   & PatchTST & 0.4003 & \textbf{0.3262} & +18.5 & +9.4 & 0.3816 & \textbf{0.3450} & +9.6 & +4.2 & 0.3657 & \textbf{0.3548} & +3.0 & +1.5 & 0.3601 \\
   & TimesNet & 0.2824 & 0.3171 & -12.3 & -1.4 & 0.2877 & 0.3219 & -11.9 & -2.9 & 0.2969 & 0.3168 & -6.7 & -1.3 & 0.3127 \\
  ETTm2 & Autoformer & 0.1844 & \textbf{0.1736} & +5.9 & -1.0 & 0.1762 & \textbf{0.1582} & +10.2 & +8.0 & 0.1735 & \textbf{0.1522} & +12.3 & +11.4 & 0.1719 \\
   & FEDformer & 0.1363 & 0.1593 & -16.9 & +8.7 & 0.1342 & 0.1495 & -11.4 & +14.3 & 0.1688 & \textbf{0.1575} & +6.7 & +9.7 & 0.1744 \\
   & PatchTST & 0.1516 & 0.1547 & -2.0 & +2.5 & 0.1469 & 0.1471 & -0.1 & +7.3 & 0.1530 & \textbf{0.1419} & +7.3 & +10.5 & 0.1586 \\
   & TimesNet & 0.1299 & 0.1474 & -13.5 & -3.4 & 0.1341 & \textbf{0.1301} & +3.0 & +8.8 & 0.1370 & \textbf{0.1316} & +3.9 & +7.7 & 0.1426 \\
  Electricity & Autoformer & 0.2925 & \textbf{0.2633} & +10.0 & +9.8 & 0.2933 & \textbf{0.2762} & +5.8 & +5.4 & 0.2924 & \textbf{0.2863} & +2.1 & +2.0 & 0.2920 \\
   & FEDformer & 0.1890 & \textbf{0.1601} & +15.3 & +15.8 & 0.1892 & \textbf{0.1683} & +11.0 & +11.5 & 0.1897 & \textbf{0.1859} & +2.0 & +2.2 & 0.1901 \\
   & PatchTST & 0.1978 & \textbf{0.1724} & +12.8 & +15.1 & 0.2015 & \textbf{0.1804} & +10.5 & +11.1 & 0.2026 & \textbf{0.1979} & +2.3 & +2.5 & 0.2030 \\
   & TimesNet & 0.1753 & \textbf{0.1311} & +25.2 & +20.3 & 0.1694 & \textbf{0.1398} & +17.5 & +15.0 & 0.1654 & \textbf{0.1494} & +9.7 & +9.2 & 0.1645 \\
  Traffic & Autoformer & 1.0117 & \textbf{0.9479} & +6.3 & +3.2 & 0.9971 & \textbf{0.9532} & +4.4 & +2.6 & 0.9853 & \textbf{0.9608} & +2.5 & +1.9 & 0.9791 \\
   & FEDformer & 0.6945 & 0.7003 & -0.8 & +0.8 & 0.6982 & \textbf{0.6948} & +0.5 & +1.6 & 0.7046 & \textbf{0.6998} & +0.7 & +0.9 & 0.7061 \\
   & PatchTST & 0.6260 & \textbf{0.6087} & +2.8 & +3.1 & 0.6132 & \textbf{0.5918} & +3.5 & +0.3 & 0.5994 & \textbf{0.5886} & +1.8 & +0.3 & 0.5903 \\
   & TimesNet & 0.5922 & \textbf{0.5464} & +7.7 & +6.4 & 0.5866 & \textbf{0.5513} & +6.0 & +5.6 & 0.5866 & \textbf{0.5695} & +2.9 & +2.5 & 0.5839 \\
  Weather & Autoformer & 0.2675 & \textbf{0.1896} & +29.1 & +20.5 & 0.2521 & \textbf{0.2012} & +20.2 & +15.7 & 0.2414 & \textbf{0.2236} & +7.4 & +6.3 & 0.2386 \\
   & FEDformer & 0.3210 & \textbf{0.2348} & +26.9 & +22.8 & 0.3130 & \textbf{0.2561} & +18.2 & +15.8 & 0.3090 & \textbf{0.2829} & +8.4 & +6.9 & 0.3040 \\
   & PatchTST & 0.1783 & \textbf{0.1338} & +24.9 & +20.7 & 0.1727 & \textbf{0.1381} & +20.0 & +18.1 & 0.1701 & \textbf{0.1548} & +9.0 & +8.2 & 0.1687 \\
   & TimesNet & 0.1278 & \textbf{0.1133} & +11.3 & +17.7 & 0.1316 & \textbf{0.1216} & +7.6 & +11.7 & 0.1343 & \textbf{0.1256} & +6.5 & +8.8 & 0.1377 \\
  \bottomrule
  \end{tabular}
  }
\end{table*}

\begin{figure*}[!t]
  \centering
  \setlength{\casefigH}{0.27\textheight}
  \begin{minipage}[t]{0.54\textwidth}
    \centering
    \includegraphics[width=0.98\linewidth,height=\casefigH,keepaspectratio]{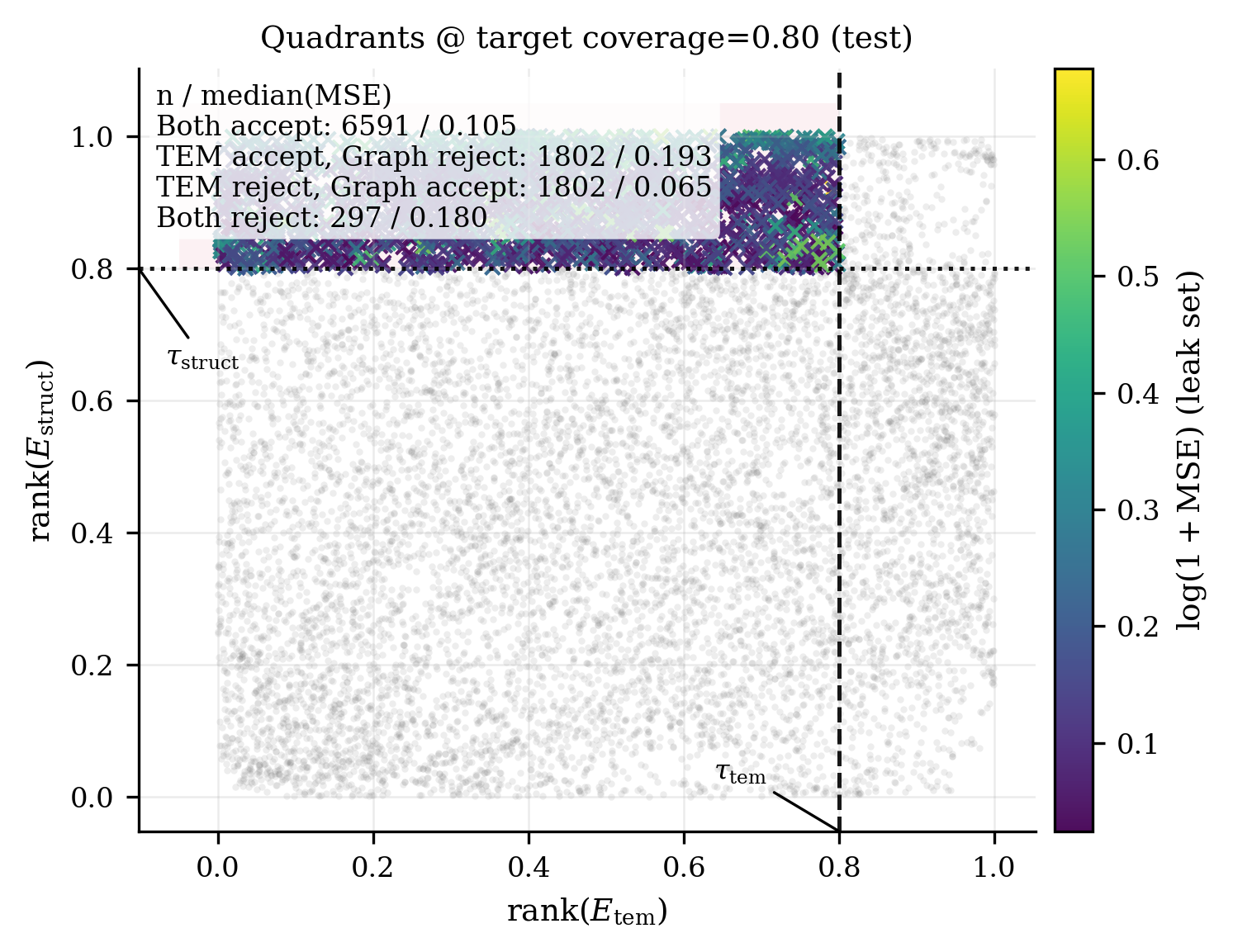}
  \end{minipage}\hfill
  \begin{minipage}[t]{0.44\textwidth}
    \centering
    \includegraphics[width=0.98\linewidth,height=\casefigH,keepaspectratio]{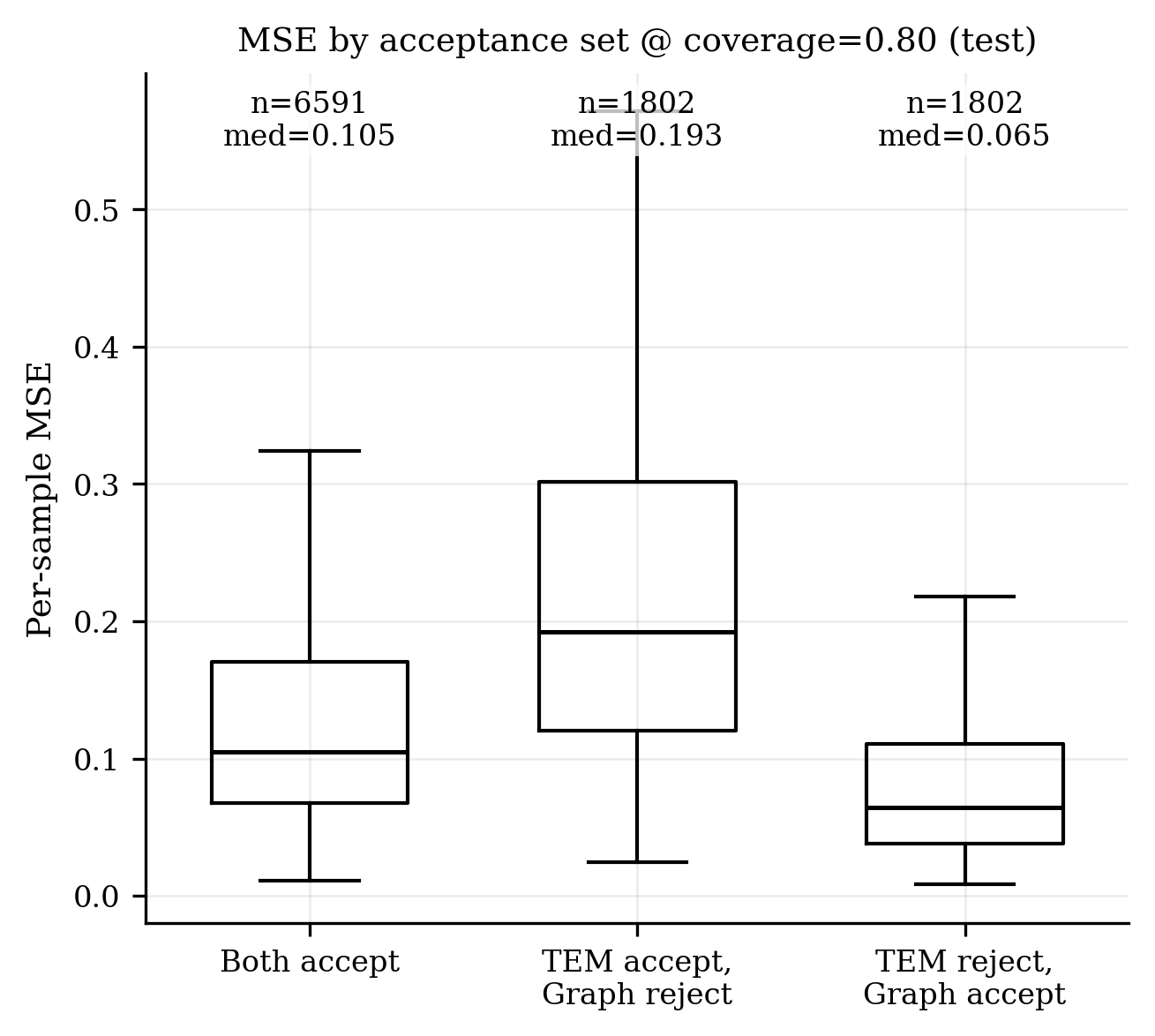}
  \end{minipage}
  \caption{Case study (coverage $0.80$). Percentile ranks on the test split (lower $\Rightarrow$ more likely accepted). Dashed lines mark $80\%$ nominal coverage via test-split score quantiles under the same protocol as Table~\ref{tab:selective_mse_main}. Left: ranks of $\Etem$ vs.\ $E_{\struct}$; upper-left marks \emph{relational misses} (\tem{} accepts, graph rejects). Right: test MSE by acceptance region; relational misses skew higher-error than mutual acceptance.}
  \label{fig:case_quadrants_boxplot}
\end{figure*}

\subsection{Case study: when TEM misses structural failures}
\label{sec:case_study}

To make the qualitative difference between instance-level and relational abstention explicit, Figure~\ref{fig:case_quadrants_boxplot} visualizes acceptance disagreements at $\rho=0.80$ on the test split (chosen only for visual separation in the rank--rank scatter, \emph{not} because it is a reported operating point; tabulated coverages remain $\rho\in\{0.5,0.7,0.9\}$). We use percentile ranks of the two scores (lower score $\Rightarrow$ more likely accepted) and dashed lines at $80\%$ nominal retention under the same Protocol~A quantile logic as Table~\ref{tab:selective_mse_main}.

We define a \emph{relational miss} as a sample $\Etem$ would accept but $E_{\struct}$ rejects. Figure~\ref{fig:case_quadrants_boxplot} and Figure~\ref{fig:case_heatmap} show relational misses carry higher test MSE than mutual acceptance (qualitative support for Section~\ref{sec:method}).

\begin{figure}[!t]
  \centering
  \includegraphics[width=0.97\linewidth]{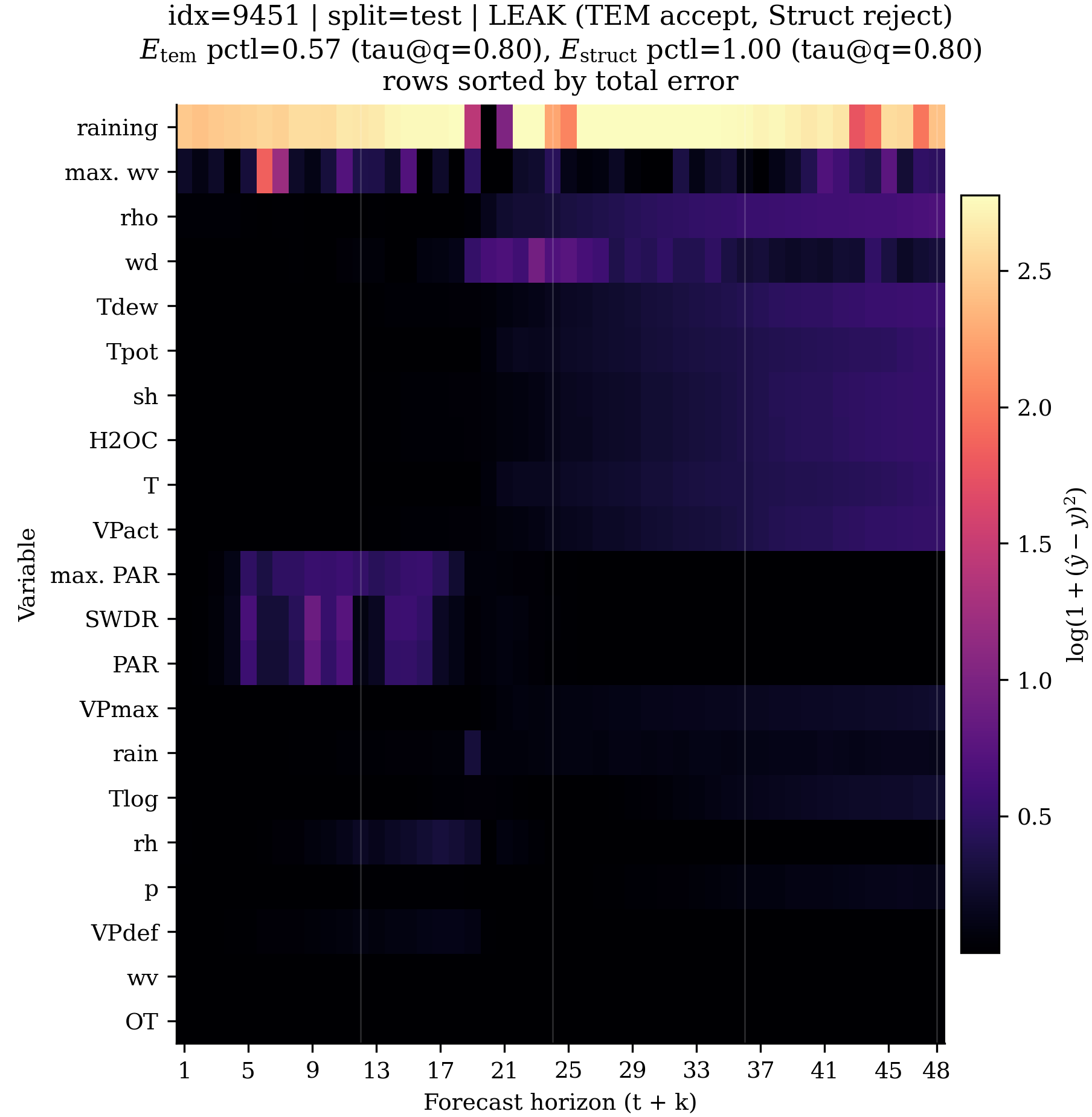}
  \caption{A representative relational miss (test index 9451): \tem{} accepts at $0.80$ target coverage while the structural gate rejects. Heatmap: channel-wise squared error over the horizon (log-scaled).}
  \label{fig:case_heatmap}
\end{figure}

\subsection{Ablations}
\label{sec:ablations}

On Weather, Electricity, and ETTh2 (PatchTST, Autoformer), A0 is full SASF; A1 drops alignment ($\lambda_a{=}0$); A2 uses uniform $w_b$; A3 fixes $A$ to a correlation graph. A1 tests whether $E_{\struct}$ ranks risk without explicit alignment; A2 whether error weighting shapes $A$; A3 whether adaptation beats a static graph. Table~\ref{tab:ablation2_main} (Protocol~A): A1/A2 usually hurt versus A0; A3 is often worse than A0. Validation$\to$test ablations: Appendix~\ref{sec:appendix_supp_ablation}.

% Main-paper graph ablation (\texttt{tau\_mode}=quantile).
\begin{table*}[!t]
  \centering
  \footnotesize
  \setlength{\tabcolsep}{3.5pt}
  \renewcommand{\arraystretch}{1.05}
  \caption{%
    Graph-only ablation (\textbf{PatchTST}, \textbf{Autoformer}; \texttt{tau\_mode}=\texttt{quantile}, Protocol~A).
    Test relative improvement $(1-\mathrm{MSE}_{\mathrm{sel}}/\mathrm{MSE}_{\mathrm{orig}})\times 100$ at $c\in\{0.5,0.7,0.9\}$ on Weather, Electricity, ETTh2.
    \textbf{A0} full; \textbf{A1} w/o alignment; \textbf{A2} w/o error weighting; \textbf{A3} correlation-graph gate.
    \textbf{Full (A0)} matches Table~\ref{tab:selective_mse_main} \textbf{S\%}$\uparrow$; bold: best per column within each backbone block.
  }
  \label{tab:ablation2_main}
  \begin{tabular}{@{}ll *{9}{r}@{}}
    \toprule
    & & \multicolumn{3}{c}{\textbf{Weather}} & \multicolumn{3}{c}{\textbf{Electricity}} & \multicolumn{3}{c}{\textbf{ETTh2}} \\
    \cmidrule(lr){3-5} \cmidrule(lr){6-8} \cmidrule(lr){9-11}
    \textbf{Backbone} & \textbf{Ablation} & $0.5$ & $0.7$ & $0.9$ & $0.5$ & $0.7$ & $0.9$ & $0.5$ & $0.7$ & $0.9$ \\
    \midrule
    \multirow{4}{*}{PatchTST}
      & Full (A0) & \textbf{20.70} & \textbf{18.10} & \textbf{8.20} & \textbf{15.10} & \textbf{11.10} & \textbf{2.50} & \textbf{24.10} & \textbf{21.30} & \textbf{8.20} \\
      & w/o align (A1) & -8.13 & -2.04 & 0.78 & 13.30 & 9.74 & 2.12 & -4.76 & 1.70 & 4.45 \\
      & w/o err.-wt (A2) & 10.16 & 12.51 & 7.88 & 12.87 & 5.54 & 0.80 & 9.17 & 9.15 & 5.38 \\
      & Corr.\ gate (A3) & -7.30 & -3.97 & -0.81 & 13.61 & 10.19 & 1.75 & 4.29 & 6.86 & 1.88 \\
    \midrule
    \multirow{4}{*}{Autoformer}
      & Full (A0) & \textbf{20.50} & \textbf{15.70} & \textbf{6.30} & \textbf{9.80} & \textbf{5.40} & \textbf{2.00} & \textbf{18.70} & \textbf{10.80} & \textbf{3.50} \\
      & w/o align (A1) & 1.17 & 5.21 & 3.10 & 8.33 & 4.62 & 1.78 & -0.13 & -0.40 & 0.15 \\
      & w/o err.-wt (A2) & 18.88 & 15.22 & 6.14 & 7.49 & 4.92 & 0.31 & 16.11 & 7.19 & 0.36 \\
      & Corr.\ gate (A3) & 0.93 & 2.59 & 1.44 & 8.97 & 3.40 & 1.78 & -5.73 & 0.72 & 0.80 \\
    \bottomrule
  \end{tabular}
\end{table*}

\subsection{When does relational abstention help?}
\label{sec:when_helps}

\paragraph{Three regimes.}
We interpret backbone--dataset variation through three regimes: (i)~\emph{relational failure dominates}---high error aligns with violations of stable cross-variable structure (relational misses; Section~\ref{sec:case_study}); (ii)~\emph{marginal failure dominates}---error is mainly per-channel or trajectory-level, so instance-level $\Etem$ ranks risk well; (iii)~\emph{global graph mismatch}---a single training-time $A$ poorly matches the test regime (weak or shifting coupling), so $E_{\struct}$ is less discriminative. (F1)--(F3) map to these regimes rather than to uniform superiority of either gate.

SASF is most useful in regime~(i): $E_{\struct}$ targets a risk axis that $\Etem$ does not explicitly test, consistent with gains on Weather, Electricity, and Traffic. In regimes~(ii)--(iii), $\Etem$ often wins (e.g., TimesNet and several ETT splits in F2); we interpret these primarily as informative scope limits rather than evidence for universal superiority of either gate.

\paragraph{Link to analysis.}
Theorem~\ref{thm:error_certificate} formalizes regime~(i): when $E_{\struct}(Y,A)$ is small, large $E_{\struct}(\hat{Y},A)$ certifies non-negligible squared error; the certificate weakens when smoothness fails (regimes~(ii)--(iii), Appendix~\ref{sec:appendix_theory_scope}). Heterogeneous backbone--dataset results therefore reflect how much selective risk lies on the relational axis captured by $E_{\struct}$, not an unexpected breakdown of the method.

\paragraph{Practical implications.}
$\Etem$ and $E_{\struct}$ encode different reliability axes; deployments may prefer one gate, switch by regime, or apply fusion (repository) when the dominant failure mode is known---relational abstention complements instance-level scoring rather than replacing it.

\FloatBarrier
\subsection{Threats to validity}
\label{sec:threats_validity}

\paragraph{Statistical robustness.}
Table~\ref{tab:selective_mse_main} (Protocol~A) uses one checkpoint per cell (seed~2024) and isolates ranking under matched test coverage; it is not a deployable threshold-transfer readout. To probe seed sensitivity under Protocol~B, we rerun validation$\to$test selection with seeds $\{0,1,2024\}$ on ETTh1/ETTh2/Electricity for PatchTST and Autoformer, excluding partial cells rather than imputing them (Appendix~\ref{sec:appendix_multiseed}). Table~\ref{tab:multiseed_protocol_b} aligns both backbones on the same three complete $(\text{dataset},\rho)$ settings; mean $I\%\uparrow$ averages cell-level means (each cell over seeds), and $\pm$ is the std across those cell means.

% Multiseed Protocol B — aligned subset (same 3 cells per backbone).
\begin{table}[!t]
  \centering
  \footnotesize
  \setlength{\tabcolsep}{3pt}
  \renewcommand{\arraystretch}{1.15}
  \caption{%
    \textbf{Multiseed deployable Protocol~B (aligned subset).}
    Same three complete $(\text{dataset},\rho)$ cells for both backbones: ETTh1 ($\rho{=}0.7,0.9$) and ETTh2 ($\rho{=}0.9$), with seeds $\{0,1,2024\}$ (partial cells excluded).%
  }
  \label{tab:multiseed_protocol_b}
  \resizebox{\columnwidth}{!}{%
  \begin{tabular}{@{}lcccc@{}}
    \toprule
    \textbf{Backbone} & \textbf{Cells} & \textbf{Seeds} & \textbf{Wins} & \textbf{Mean $I\%\uparrow$} \\
    \midrule
    PatchTST   & 3 & 9 & 9/9 & $+10.6 \pm 12.8$ \\
    Autoformer & 3 & 9 & 7/9 & $+7.1 \pm 17.3$ \\
    \bottomrule
  \end{tabular}%
  }
\end{table}

\noindent On this aligned subset, graph selective MSE wins all nine PatchTST seed runs (9/9) and seven of nine Autoformer runs (7/9; one-sided binomial sign test $p{\approx}0.09$). Extended PatchTST complete cells ($n{=}21$ seed runs, all graph wins) yield binomial and Wilcoxon tests with $p{\ll}10^{-5}$ (subset; not the full main-table grid; details in Appendix~\ref{sec:appendix_multiseed}). Small \textbf{I\%}/\textbf{S\%} gaps elsewhere in Table~\ref{tab:selective_mse_main} should still be read cautiously; we emphasize directional patterns (F1--F3) rather than uniform gains on every backbone--dataset pair.

\paragraph{Structural assumptions.}
Relational gating assumes that a global, unsigned smoothness prior over variables is a useful abstention proxy. This may hold unevenly across datasets and backbones (Section~\ref{sec:when_helps}); it does not model lags, signed coupling, or sample-specific graphs.

\paragraph{Deployment transfer.}
Protocol~B applies validation thresholds to test without recalibration. Validation--test score shift can yield zero test acceptances (\textit{N/A} in Appendix~\ref{sec:appendix_supp_ablation}), so deployable selective MSE is undefined in those cells. This limits threshold \emph{transfer} under Protocol~B; Protocol~A ranking diagnostics are unaffected.

\section{Conclusion}
\label{sec:conclusion}

We studied \emph{structure-aware selective forecasting}: relational consistency, operationalized by $E_{\struct}$, complements instance-level \tem{} under explicit coverage constraints. Gains concentrate where relational structure is informative (Section~\ref{sec:when_helps}); negative cells mark scope limits rather than a universal replacement for instance scores. Multivariate selective forecasting should ask both whether trajectories are plausible and whether they are relationally consistent. Limitations (protocols, seeds, deployment transfer) are in Section~\ref{sec:threats_validity}.

\FloatBarrier

\subsection*{AI Use Statement}

Generative AI tools were used only for English language editing, readability, and LaTeX/formatting assistance on the manuscript. They were not used for model design, training, or the experimental pipeline; for theoretical results, proofs, or mathematical claims; or for producing or altering primary experimental numbers reported in the paper.

All AI-assisted wording and formatting suggestions were reviewed, verified, and edited by the authors. The authors take full responsibility for the final content of this work, including text, claims, and artifacts.

\bibliography{references}

@article{geifman2017selective,
  title={Selective classification for deep neural networks},
  author={Geifman, Yonatan and El-Yaniv, Ran},
  journal={Advances in neural information processing systems},
  volume={30},
  year={2017}
}

@article{wu2021autoformer,
  title={Autoformer: Decomposition transformers with auto-correlation for long-term series forecasting},
  author={Wu, Haixu and Xu, Jiehui and Wang, Jianmin and Long, Mingsheng},
  journal={Advances in neural information processing systems},
  volume={34},
  pages={22419--22430},
  year={2021}
}

@inproceedings{zhou2022fedformer,
  title={Fedformer: Frequency enhanced decomposed transformer for long-term series forecasting},
  author={Zhou, Tian and Ma, Ziqing and Wen, Qingsong and Wang, Xue and Sun, Liang and Jin, Rong},
  booktitle={International conference on machine learning},
  pages={27268--27286},
  year={2022},
  organization={PMLR}
}

@article{nie2022time,
  title={A time series is worth 64 words: Long-term forecasting with transformers. arXiv 2022},
  author={Nie, Yuqi and Nguyen, Nam H and Sinthong, Phanwadee and Kalagnanam, Jayant},
  journal={arXiv preprint arXiv:2211.14730},
  year={2022}
}

@article{wu2022timesnet,
  title={Timesnet: Temporal 2d-variation modeling for general time series analysis},
  author={Wu, Haixu and Hu, Tengge and Liu, Yong and Zhou, Hang and Wang, Jianmin and Long, Mingsheng},
  journal={arXiv preprint arXiv:2210.02186},
  year={2022}
}

@article{lecun2006tutorial,
  title={A tutorial on energy-based learning},
  author={LeCun, Yann and Chopra, Sumit and Hadsell, Raia and Ranzato, M and Huang, Fujie and others},
  journal={Predicting structured data},
  volume={1},
  number={0},
  year={2006}
}

@inproceedings{gal2016dropout,
  title={Dropout as a {B}ayesian Approximation: Representing Model Uncertainty in Deep Learning},
  author={Gal, Yarin and Ghahramani, Zoubin},
  booktitle={International Conference on Machine Learning (ICML)},
  year={2016}
}

@inproceedings{lakshminarayanan2017simple,
  title={Simple and Scalable Predictive Uncertainty Estimation using Deep Ensembles},
  author={Lakshminarayanan, Balaji and Pritzel, Alexander and Blundell, Charles},
  booktitle={Advances in Neural Information Processing Systems (NeurIPS)},
  volume={30},
  year={2017}
}

@article{gneiting2007strictly,
  title={Strictly Proper Scoring Rules, Prediction, and Estimation},
  author={Gneiting, Tilmann and Raftery, Adrian E.},
  journal={Journal of the American Statistical Association},
  volume={102},
  number={477},
  pages={359--378},
  year={2007},
  doi={10.1198/016214506000001437}
}

@book{vovk2005algorithmic,
  title={Algorithmic Learning in a Random World},
  author={Vovk, Vladimir and Gammerman, Alexander and Shafer, Glenn},
  publisher={Springer},
  year={2005},
  isbn={9780387001524}
}

@inproceedings{kipf2017semi,
  title={Semi-Supervised Classification with Graph Convolutional Networks},
  author={Kipf, Thomas N. and Welling, Max},
  booktitle={International Conference on Learning Representations (ICLR)},
  year={2017}
}

@article{benidis2022forecasting,
  title={Deep Learning for Time Series Forecasting: Tutorial and Literature Survey},
  author={Benidis, Konstantinos and Rangapuram, Syama Sundar and Flunkert, Valentin and Wang, Bernie and Maddix, Danielle C. and Turkmen, Caner and Gasthaus, Jan and Bohlke-Schneider, Michael and Salinas, David and Stella, Lorenzo and Aubet, Fran{\c{c}}ois-Xavier and Callot, Laurent and Januschowski, Tim},
  journal={ACM Computing Surveys},
  volume={55},
  number={6},
  articleno={121},
  numpages={36},
  year={2022},
  doi={10.1145/3533382}
}

@inproceedings{romano2019conformalized,
  title={Conformalized Quantile Regression},
  author={Romano, Yaniv and Patterson, Evan and Cand{\`e}s, Emmanuel J.},
  booktitle={Advances in Neural Information Processing Systems (NeurIPS)},
  volume={32},
  year={2019}
}

@inproceedings{wu2020connecting,
  title={Connecting the dots: Multivariate time series forecasting with graph neural networks},
  author={Wu, Zonghan and Pan, Shirui and Long, Guodong and Jiang, Jing and Chang, Xiaojun and Zhang, Chengqi},
  booktitle={Proceedings of the 26th ACM SIGKDD international conference on knowledge discovery \& data mining},
  pages={753--763},
  year={2020}
}

@article{liu2022multivariate,
  title={Multivariate time-series forecasting with temporal polynomial graph neural networks},
  author={Liu, Yijing and Liu, Qinxian and Zhang, Jian-Wei and Feng, Haozhe and Wang, Zhongwei and Zhou, Zihan and Chen, Wei},
  journal={Advances in neural information processing systems},
  volume={35},
  pages={19414--19426},
  year={2022}
}

@article{chen2022balanced,
  title={Balanced Graph Structure Learning for Multivariate Time Series Forecasting},
  author={Chen, Weijun and Wang, Yanze and Du, Chengshuo and Jia, Zhenglong and Liu, Feng and Chen, Ran},
  journal={arXiv preprint arXiv:2201.09686},
  year={2022}
}

@article{chen2023multi,
  title={Multi-scale adaptive graph neural network for multivariate time series forecasting},
  author={Chen, Ling and Chen, Donghui and Shang, Zongjiang and Wu, Binqing and Zheng, Cen and Wen, Bo and Zhang, Wei},
  journal={IEEE Transactions on Knowledge and Data Engineering},
  volume={35},
  number={10},
  pages={10748--10761},
  year={2023},
  publisher={IEEE}
}

@article{cai2024forecastgrapher,
  title={Forecastgrapher: Redefining multivariate time series forecasting with graph neural networks},
  author={Cai, Wanlin and Wang, Kun and Wu, Hao and Chen, Xiaoxu and Wu, Yuankai},
  journal={arXiv preprint arXiv:2405.18036},
  year={2024}
}

@article{kim2023hierarchical,
  title={Hierarchical joint graph learning and multivariate time series forecasting},
  author={Kim, Juhyeon and Lee, Hyungeun and Yu, Seungwon and Hwang, Ung and Jung, Wooyeol and Yoon, Kijung},
  journal={IEEE Access},
  volume={11},
  pages={118386--118394},
  year={2023},
  publisher={IEEE}
}

@article{sriramulu2023adaptive,
  title={Adaptive dependency learning graph neural networks},
  author={Sriramulu, Abishek and Fourrier, Nicolas and Bergmeir, Christoph},
  journal={Information Sciences},
  volume={625},
  pages={700--714},
  year={2023},
  publisher={Elsevier}
}

@inproceedings{zhang2024irregular,
  title={Irregular multivariate time series forecasting: A transformable patching graph neural networks approach},
  author={Zhang, Weijia and Yin, Chenlong and Liu, Hao and Zhou, Xiaofang and Xiong, Hui},
  booktitle={Forty-first International Conference on Machine Learning},
  year={2024}
}

@article{brusokas2025time,
  title={The Time-Energy Model: Selective Time-Series Forecasting Using Energy-Based Models},
  author={Brusokas, Jonas and Tirupathi, Seshu and Zhang, Dalin and Pedersen, Torben Bach},
  journal={Transactions on Machine Learning Research},
  year={2025}
}

@inproceedings{zhou2021informer,
  title={Informer: Beyond efficient transformer for long sequence time-series forecasting},
  author={Zhou, Haoyi and Zhang, Shanghang and Peng, Jieqi and Zhang, Shuai and Li, Jianxin and Xiong, Hui and Zhang, Wancai},
  booktitle={Proceedings of the AAAI conference on artificial intelligence},
  volume={35},
  number={12},
  pages={11106--11115},
  year={2021}
}

@inproceedings{zeng2023transformers,
  title={Are transformers effective for time series forecasting?},
  author={Zeng, Ailing and Chen, Muxi and Zhang, Lei and Xu, Qiang},
  booktitle={Proceedings of the AAAI conference on artificial intelligence},
  volume={37},
  number={9},
  pages={11121--11128},
  year={2023}
}

@inproceedings{cortes2016learning,
  title={Learning with rejection},
  author={Cortes, Corinna and DeSalvo, Giulia and Mohri, Mehryar},
  booktitle={International conference on algorithmic learning theory},
  pages={67--82},
  year={2016},
  organization={Springer}
}

@article{angelopoulos2021gentle,
  title={A gentle introduction to conformal prediction and distribution-free uncertainty quantification},
  author={Angelopoulos Anastasios, N and Stephen, Bates},
  journal={arXiv preprint arXiv:2107.07511},
  year={2021}
}

@article{yu2017spatio,
  title={Spatio-temporal graph convolutional networks: A deep learning framework for traffic forecasting},
  author={Yu, Bing and Yin, Haoteng and Zhu, Zhanxing},
  journal={arXiv preprint arXiv:1709.04875},
  year={2017}
}

@article{hyvarinen2005estimation,
  title={Estimation of non-normalized statistical models by score matching.},
  author={Hyv{\"a}rinen, Aapo and Dayan, Peter},
  journal={Journal of Machine Learning Research},
  volume={6},
  number={4},
  year={2005}
}

\clearpage
\onecolumn
\appendix
% Avoid huge vertical stretch when [H] floats leave a short page (global \flushbottom).
\raggedbottom
\setlength{\textfloatsep}{8pt plus 2pt minus 4pt}
\setlength{\intextsep}{8pt plus 2pt minus 4pt}
\setlength{\floatsep}{8pt plus 2pt minus 4pt}
\setlength{\parskip}{0.35\baselineskip}

\section{Supplementary ablation}
\label{sec:appendix_supp_ablation}

\noindent\textbf{Deployable evaluation (Protocol~B).} This appendix is the primary deployment-style readout: thresholds are fit on validation quantiles and applied to test selection (Section~\ref{sec:experiments}). Table~\ref{tab:selective_mse_vali_test} is the full \tem{} vs.\ graph comparison; Table~\ref{tab:ablation_main} gives a complementary ablation under the same validation$\to$test pipeline. Main-text Table~\ref{tab:selective_mse_main} remains Protocol~A (test quantiles) for ranking diagnostics only.

% Appendix ablation: val-split thresholds, test-time selection (supplementary to table_selective_mse_unified.tex).
% Requires: \usepackage{multirow}

\begin{table}[H]
  \centering
  \footnotesize
  \setlength{\tabcolsep}{3.5pt}
  \caption{%
    \textbf{(Supplementary.)}
    Ablation study for \textbf{graph-only} selective inference (\textbf{PatchTST} and \textbf{Autoformer}).
    For each target coverage, we choose energy thresholds on the \textbf{validation} split and apply them to select samples on the \textbf{test} split (complementary to the selective reporting used for Table~\ref{tab:selective_mse_main} in the main text).
    We report \emph{test relative improvement} $(1-\mathrm{MSE}_{\mathrm{sel}}/\mathrm{MSE}_{\mathrm{orig}})\times 100$ at $c\in\{0.5,0.7,0.9\}$.
    Datasets: Weather, Electricity, ETTh2 (multivariate \texttt{M}).
    \textbf{A0}: full model (learned adaptive graph, alignment + error-weighted graph loss);
    \textbf{A1}: $\lambda_a{=}0$ (no alignment);
    \textbf{A2}: uniform graph regularization weights (no error weighting);
    \textbf{A3}: fixed correlation-graph gate (no learned $A$).
    \emph{Bold} values: best relative improvement in each column within each backbone block (ties bolded together).
  }
  \label{tab:ablation_main}
  \begin{tabular}{@{}ll *{9}{r}@{}}
    \toprule
    & & \multicolumn{3}{c}{\textbf{Weather}} & \multicolumn{3}{c}{\textbf{Electricity}} & \multicolumn{3}{c}{\textbf{ETTh2}} \\
    \cmidrule(lr){3-5} \cmidrule(lr){6-8} \cmidrule(lr){9-11}
    \textbf{Backbone} & \textbf{Ablation} & $0.5$ & $0.7$ & $0.9$ & $0.5$ & $0.7$ & $0.9$ & $0.5$ & $0.7$ & $0.9$ \\
    \midrule
    \multirow{4}{*}{PatchTST}
      & Full (A0) & \textbf{12.78} & \textbf{9.73} & \textbf{3.55} & \textbf{28.66} & \textbf{20.50} & \textbf{17.36} & \textbf{38.17} & \textbf{11.27} & \textbf{8.80} \\
      & w/o align (A1) & -9.06 & -2.42 & 0.29 & 8.62 & 9.31 & 13.52 & -7.09 & -2.58 & 2.03 \\
      & w/o err.-wt (A2) & 12.31 & 9.42 & 3.39 & 24.01 & 20.39 & 14.70 & 37.91 & 10.72 & 8.54 \\
      & Corr.\ gate (A3) & -7.51 & -7.30 & -3.28 & 14.60 & 14.16 & 14.27 & 10.87 & -5.78 & -7.65 \\
    \midrule
    \multirow{4}{*}{Autoformer}
      & Full (A0) & \textbf{20.50} & \textbf{15.66} & \textbf{6.87} & \textbf{17.83} & \textbf{14.10} & \textbf{13.94} & \textbf{76.77} & \textbf{66.98} & \textbf{52.25} \\
      & w/o align (A1) & 3.12 & 1.75 & 5.29 & 12.33 & 13.53 & 8.97 & 58.73 & 53.66 & 31.62 \\
      & w/o err.-wt (A2) & 19.77 & 14.48 & 2.66 & 11.55 & 11.59 & 6.45 & 75.56 & 64.38 & 51.98 \\
      & Corr.\ gate (A3) & 3.73 & 3.47 & 2.70 & 16.03 & 13.76 & 12.56 & 26.01 & -17.65 & -8.10 \\
    \bottomrule
  \end{tabular}
\end{table}

\vspace{0.5em}
\noindent\emph{Validation$\to$test thresholding (TEM vs.\ graph).}
Under the same validation-calibrated thresholding protocol, Table~\ref{tab:selective_mse_vali_test} reports \tem{} vs.\ graph selective MSE across all backbones and datasets.
In a few backbone--dataset cases, a validation-derived threshold can yield zero accepted test samples (coverage $=0$) because abstention-score distributions shift across splits; we mark these as N/A since selective MSE and derived I/S metrics are undefined.

% Unified validation->test thresholding table (appendix).
% Thresholds are fit on validation quantiles and applied on test.
% TEM = ebm_only, Graph = graph_only. "---" = missing row in CSV (no test samples under val-derived threshold).
% Preamble: \usepackage{booktabs,graphicx,multirow}
\begin{table}[H]
  \centering
  \footnotesize
  \setlength{\tabcolsep}{3.8pt}%
  \renewcommand{\arraystretch}{1.12}%
  \caption{Validation$\to$test selective prediction MSE across datasets and backbones (lower is better). For each $\rho\in\{0.5,0.7,0.9\}$, acceptance thresholds are fit as empirical $\rho$-quantiles of each \textbf{raw} abstention score (\tem{} vs.\ structural) on the \textbf{validation} split and then applied to the test split without re-calibration (Protocol~B). Graph is bolded when lower than TEM at the same $\rho$. \textbf{I\%}$\uparrow$ is relative MSE reduction vs.\ TEM: $(\text{TEM}-\text{Graph})/\text{TEM}\times 100$. \textbf{S\%}$\uparrow$ is relative MSE reduction vs.\ full-split MSE: $(\text{Full}-\text{Graph}_{\rho})/\text{Full}\times 100$. Positive is better for both. \textit{N/A} indicates that the validation-derived threshold yields \textbf{zero accepted test samples} (coverage $=0$), so selective MSE and the derived I/S metrics are undefined.}
  \label{tab:selective_mse_vali_test}
  \resizebox{\textwidth}{!}{%
  % Slightly wider gaps between $\rho$ blocks than within each block (readability vs.\ width).
  \begin{tabular}{@{}ll@{\hspace{3pt}}rrrr@{\hspace{9pt}}rrrr@{\hspace{9pt}}rrrr@{\hspace{11pt}}r@{}}
    \toprule
    Dataset & Backbone & \multicolumn{4}{c}{$\rho=0.5$} & \multicolumn{4}{c}{$\rho=0.7$} & \multicolumn{4}{c}{$\rho=0.9$} & \multirow{2}{*}{\textbf{Full}} \\
    \cmidrule(lr){3-6} \cmidrule(lr){7-10} \cmidrule(lr){11-14}
     &  & TEM & Graph & \textbf{I\%}$\uparrow$ & \textbf{S\%}$\uparrow$ & TEM & Graph & \textbf{I\%}$\uparrow$ & \textbf{S\%}$\uparrow$ & TEM & Graph & \textbf{I\%}$\uparrow$ & \textbf{S\%}$\uparrow$ \\
    \midrule
    ETTh1 & Autoformer & 0.4562 & \textbf{0.3983} & +12.7 & +9.2 & 0.4418 & \textbf{0.4186} & +5.3 & +4.6 & 0.4392 & \textbf{0.4145} & +5.6 & +5.5 & 0.4388 \\
     & FEDformer & 0.4019 & \textbf{0.3100} & +22.9 & +23.4 & 0.4071 & \textbf{0.3849} & +5.5 & +4.9 & 0.4044 & \textbf{0.3989} & +1.4 & +1.5 & 0.4049 \\
     & PatchTST & 0.3810 & \textbf{0.3394} & +10.9 & +11.6 & 0.3826 & \textbf{0.3649} & +4.6 & +4.9 & 0.3836 & \textbf{0.3827} & +0.2 & +0.3 & 0.3838 \\
     & TimesNet & 0.3713 & \textbf{0.2729} & +26.5 & +29.6 & 0.3715 & \textbf{0.3146} & +15.3 & +18.9 & 0.3870 & \textbf{0.3332} & +13.9 & +14.1 & 0.3879 \\
    \addlinespace[0.2em]
    ETTh2 & Autoformer & 0.2549 & \textbf{0.0955} & +62.5 & +68.3 & 0.2598 & \textbf{0.1210} & +53.4 & +59.8 & 0.2715 & \textbf{0.1591} & +41.4 & +47.2 & 0.3011 \\
     & FEDformer & \textit{N/A} & 0.2701 & \textit{N/A} & +14.5 & 0.3384 & \textbf{0.2682} & +20.7 & +15.1 & 0.3103 & \textbf{0.2917} & +6.0 & +7.7 & 0.3159 \\
     & PatchTST & \textit{N/A} & 0.1566 & \textit{N/A} & +39.0 & \textit{N/A} & 0.2269 & \textit{N/A} & +11.7 & 0.2587 & \textbf{0.2120} & +18.1 & +17.5 & 0.2569 \\
     & TimesNet & 0.2360 & 0.2629 & -11.4 & +0.0 & 0.2519 & 0.2629 & -4.4 & +0.0 & 0.2537 & 0.2629 & -3.6 & +0.0 & 0.2629 \\
    \addlinespace[0.2em]
    ETTm1 & Autoformer & 0.4678 & \textbf{0.4170} & +10.9 & +5.5 & 0.4477 & \textbf{0.4400} & +1.7 & +0.2 & 0.4392 & 0.4411 & -0.4 & +0.0 & 0.4411 \\
     & FEDformer & 0.4648 & \textbf{0.4053} & +12.8 & +10.1 & 0.4655 & \textbf{0.4422} & +5.0 & +1.9 & 0.4538 & \textbf{0.4509} & +0.6 & +0.0 & 0.4509 \\
     & PatchTST & 0.4115 & \textbf{0.3296} & +19.9 & +8.5 & 0.3924 & \textbf{0.3528} & +10.1 & +2.0 & 0.3733 & \textbf{0.3600} & +3.6 & +0.0 & 0.3601 \\
     & TimesNet & 0.2835 & 0.3187 & -12.4 & -1.9 & 0.2859 & 0.3177 & -11.1 & -1.6 & 0.2967 & 0.3137 & -5.7 & -0.3 & 0.3127 \\
    \addlinespace[0.2em]
    ETTm2 & Autoformer & 0.1732 & \textit{N/A} & \textit{N/A} & \textit{N/A} & 0.1746 & \textbf{0.1744} & +0.1 & -1.5 & 0.1719 & \textbf{0.1662} & +3.3 & +3.3 & 0.1719 \\
     & FEDformer & 0.2440 & \textit{N/A} & \textit{N/A} & \textit{N/A} & 0.1886 & \textbf{0.1883} & +0.2 & -8.0 & 0.1325 & 0.1596 & -20.5 & +8.5 & 0.1744 \\
     & PatchTST & 0.1519 & \textit{N/A} & \textit{N/A} & \textit{N/A} & 0.1478 & 0.1653 & -11.8 & -4.2 & 0.1483 & 0.1502 & -1.3 & +5.3 & 0.1586 \\
     & TimesNet & 0.1270 & \textbf{0.1049} & +17.4 & +26.4 & 0.1328 & 0.1725 & -29.9 & -21.0 & 0.1367 & 0.1374 & -0.5 & +3.6 & 0.1426 \\
    \addlinespace[0.2em]
    Electricity & Autoformer & 0.2924 & \textbf{0.2567} & +12.2 & +12.1 & 0.2933 & \textbf{0.2565} & +12.5 & +12.2 & 0.2926 & \textbf{0.2618} & +10.5 & +10.3 & 0.2920 \\
     & FEDformer & 0.1888 & \textbf{0.1593} & +15.6 & +16.2 & 0.1894 & \textbf{0.1589} & +16.1 & +16.4 & 0.1897 & \textbf{0.1628} & +14.2 & +14.4 & 0.1901 \\
     & PatchTST & 0.1983 & \textbf{0.1508} & +24.0 & +25.7 & 0.2009 & \textbf{0.1632} & +18.8 & +19.6 & 0.2023 & \textbf{0.1671} & +17.4 & +17.7 & 0.2030 \\
     & TimesNet & 0.1769 & \textbf{0.1306} & +26.2 & +20.6 & 0.1708 & \textbf{0.1316} & +23.0 & +20.0 & 0.1661 & \textbf{0.1306} & +21.4 & +20.6 & 0.1645 \\
    \addlinespace[0.2em]
    Traffic & Autoformer & 1.0131 & \textbf{0.9573} & +5.5 & +2.2 & 0.9958 & \textbf{0.9478} & +4.8 & +3.2 & 0.9874 & \textbf{0.9538} & +3.4 & +2.6 & 0.9791 \\
     & FEDformer & 0.6946 & 0.7084 & -2.0 & -0.3 & 0.6979 & 0.7012 & -0.5 & +0.7 & 0.7046 & \textbf{0.6972} & +1.0 & +1.3 & 0.7061 \\
     & PatchTST & 0.6270 & 0.6436 & -2.6 & -9.0 & 0.6130 & \textbf{0.6122} & +0.1 & -3.7 & 0.5983 & \textbf{0.5922} & +1.0 & -0.3 & 0.5903 \\
     & TimesNet & 0.5900 & \textbf{0.5286} & +10.4 & +9.5 & 0.5858 & \textbf{0.5434} & +7.2 & +6.9 & 0.5867 & \textbf{0.5683} & +3.1 & +2.7 & 0.5839 \\
    \addlinespace[0.2em]
    Weather & Autoformer & 0.2648 & \textbf{0.2004} & +24.3 & +16.0 & 0.2514 & \textbf{0.2178} & +13.4 & +8.7 & 0.2400 & \textbf{0.2322} & +3.3 & +2.7 & 0.2386 \\
     & FEDformer & 0.3134 & \textbf{0.2300} & +26.6 & +24.3 & 0.3084 & \textbf{0.2523} & +18.2 & +17.0 & 0.3051 & \textbf{0.2933} & +3.9 & +3.5 & 0.3040 \\
     & PatchTST & 0.1792 & \textbf{0.1443} & +19.5 & +14.5 & 0.1722 & \textbf{0.1490} & +13.5 & +11.7 & 0.1701 & \textbf{0.1614} & +5.1 & +4.3 & 0.1687 \\
     & TimesNet & 0.1232 & \textbf{0.1215} & +1.4 & +11.8 & 0.1311 & \textbf{0.1251} & +4.6 & +9.2 & 0.1333 & \textbf{0.1305} & +2.1 & +5.2 & 0.1377 \\
    \bottomrule
  \end{tabular}
  }
\end{table}

\section{Multiseed robustness (Protocol B)}
\label{sec:appendix_multiseed}

\noindent Main Tables~\ref{tab:selective_mse_main}--\ref{tab:ablation2_main} use seed~2024 only. The aligned multiseed summary (Table~\ref{tab:multiseed_protocol_b}) is reported in Section~\ref{sec:threats_validity}; this appendix records extended PatchTST cells, statistical tests, and excluded partial Protocol~B settings.

\vspace{0.35em}
\noindent\textbf{PatchTST (extended complete cells).}
Adding ETTh1 $\rho{=}0.5$ and Electricity ($\rho{=}0.5,0.7,0.9$) yields 7 cells / 21 seed runs, all graph wins (21/21).
Binomial sign test ($n{=}21$): $p{\approx}1.9{\times}10^{-6}$; Wilcoxon ($n{=}21$): $p{\approx}5.1{\times}10^{-7}$ (subset test; not the full main-table grid).
\noindent\textbf{Excluded (incomplete Protocol~B cells).}
Autoformer ETTh1 $\rho{=}0.5$; Autoformer ETTh2 $\rho{=}0.5,0.7$; PatchTST ETTh2 $\rho{=}0.5,0.7$ (not all three seeds yield comparable Graph vs.\ \tem{} selective MSE under validation-calibrated thresholds).
Per-seed logs are included with the experiment code release.

\section{Backbone--dataset selective curves}
\label{sec:appendix_backbone_dataset_curves}

We provide supplementary risk--coverage visualizations for selected backbone--dataset pairs to complement the main tables. Each plot reports test selective MSE (y-axis) against target coverage (x-axis). For each target coverage level, thresholds follow the same test-split empirical quantile logic as Section~\ref{sec:experiments}: for each raw abstention score we take the empirical quantile on the test split corresponding to that coverage level.

\noindent\textbf{How to read the curves.}
Lower is better. The dashed line (\texttt{orig\_MSE}) is the full-coverage test MSE without abstention. Solid blue (\texttt{ebm\_only}) is instance-level $\Etem$; orange dashed (\texttt{graph\_only}) is $E_{\struct}$. A curve below \texttt{orig\_MSE} means abstention helps; \texttt{graph\_only} below \texttt{ebm\_only} means the structural gate ranks high-error samples better under test-split quantile gating (Section~\ref{sec:experiments}).

\noindent\textbf{Autoformer.}
\begin{figure}[H]
  \centering
  \begin{minipage}[t]{0.32\textwidth}
    \centering
    \includegraphics[width=\linewidth]{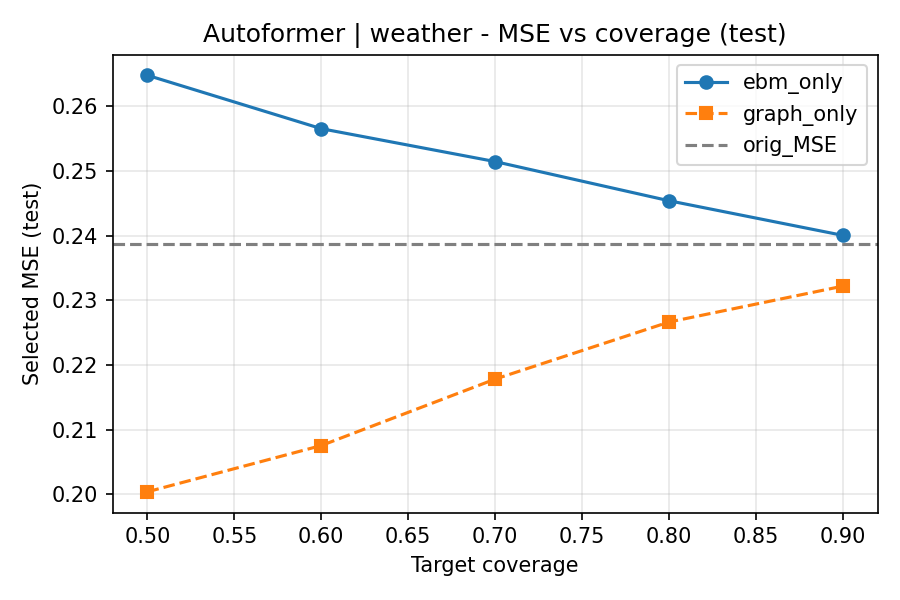}
    \vspace{-0.35em}
    {\small (a) Weather}
  \end{minipage}\hfill
  \begin{minipage}[t]{0.32\textwidth}
    \centering
    \includegraphics[width=\linewidth]{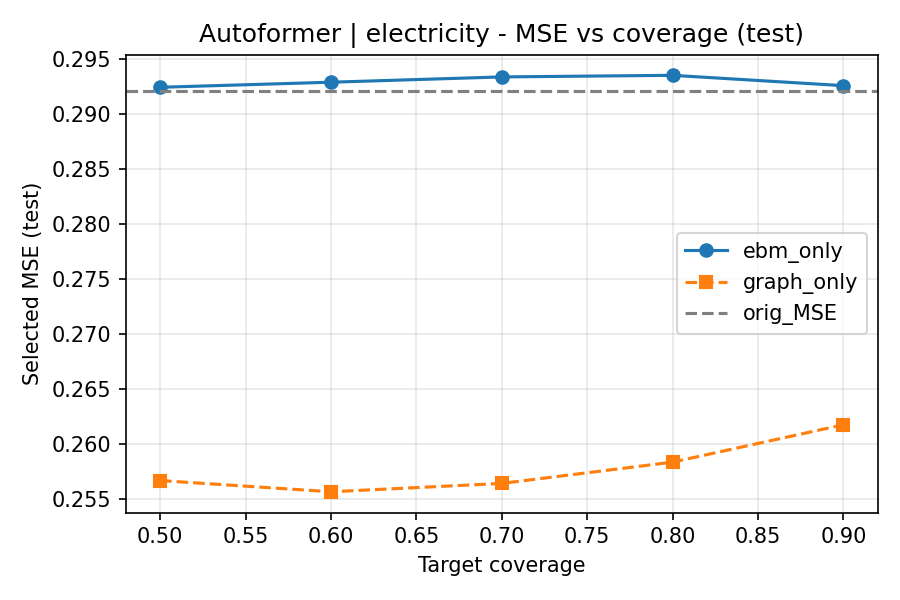}
    \vspace{-0.35em}
    {\small (b) Electricity}
  \end{minipage}\hfill
  \begin{minipage}[t]{0.32\textwidth}
    \centering
    \includegraphics[width=\linewidth]{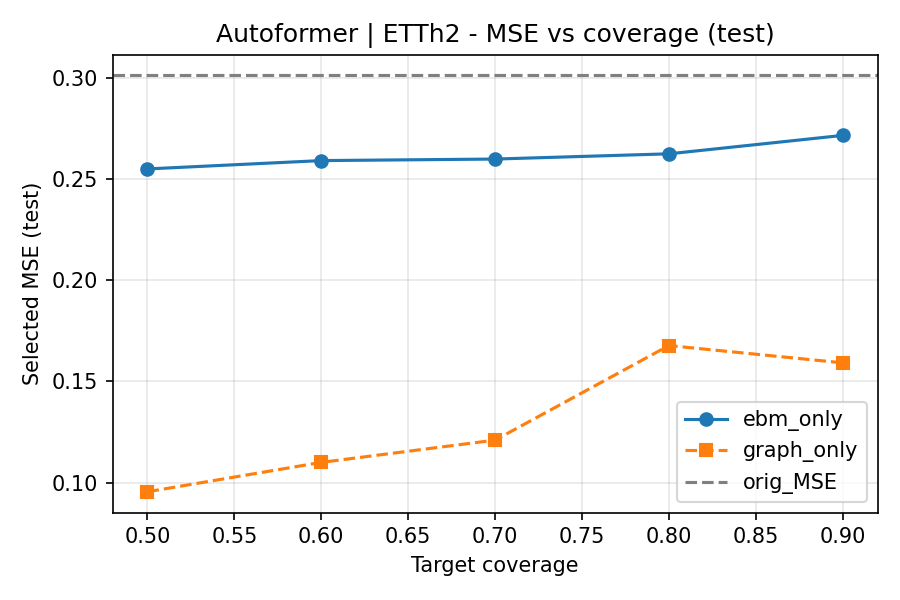}
    \vspace{-0.35em}
    {\small (c) ETTh2}
  \end{minipage}
  \caption{Autoformer: test selective MSE vs.\ target coverage on three datasets, using the same test-split quantile protocol as Section~\ref{sec:experiments}. We compare \tem{}-only (\texttt{ebm\_only}), graph-only (\texttt{graph\_only}), and the full-coverage baseline (\texttt{orig\_MSE}). Across these datasets, the structural gate typically yields substantially lower selective error at matched nominal coverage.}
  \label{fig:appendix_backbone_dataset_curves_autoformer}
\end{figure}

\noindent\textbf{PatchTST.}
\begin{figure}[H]
  \centering
  \begin{minipage}[t]{0.32\textwidth}
    \centering
    \includegraphics[width=\linewidth]{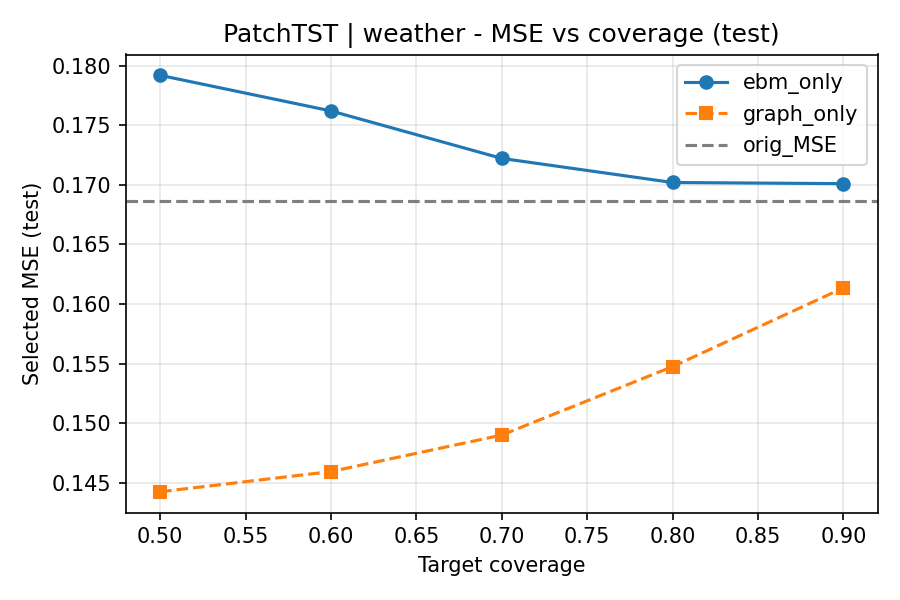}
    \vspace{-0.35em}
    {\small (a) Weather}
  \end{minipage}\hfill
  \begin{minipage}[t]{0.32\textwidth}
    \centering
    \includegraphics[width=\linewidth]{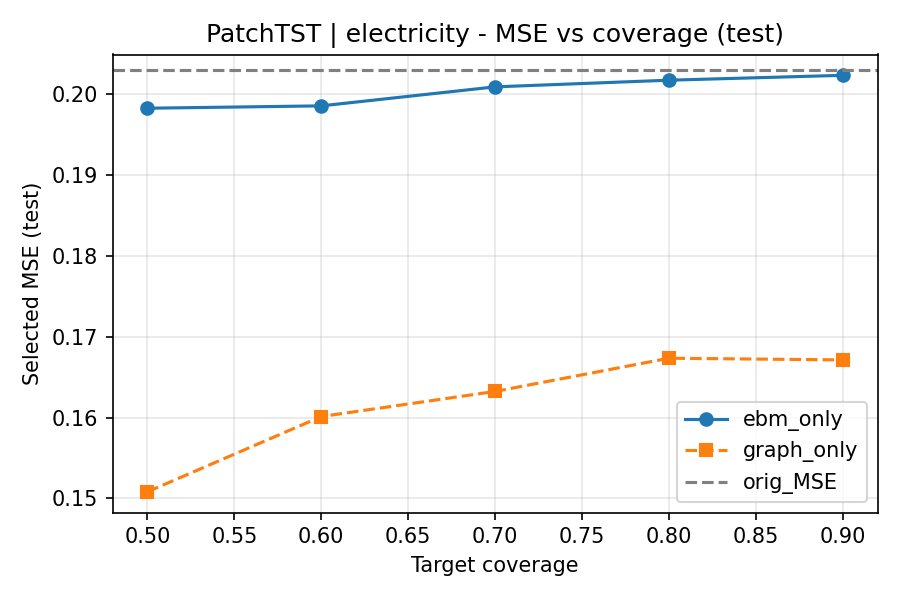}
    \vspace{-0.35em}
    {\small (b) Electricity}
  \end{minipage}\hfill
  \begin{minipage}[t]{0.32\textwidth}
    \centering
    \includegraphics[width=\linewidth]{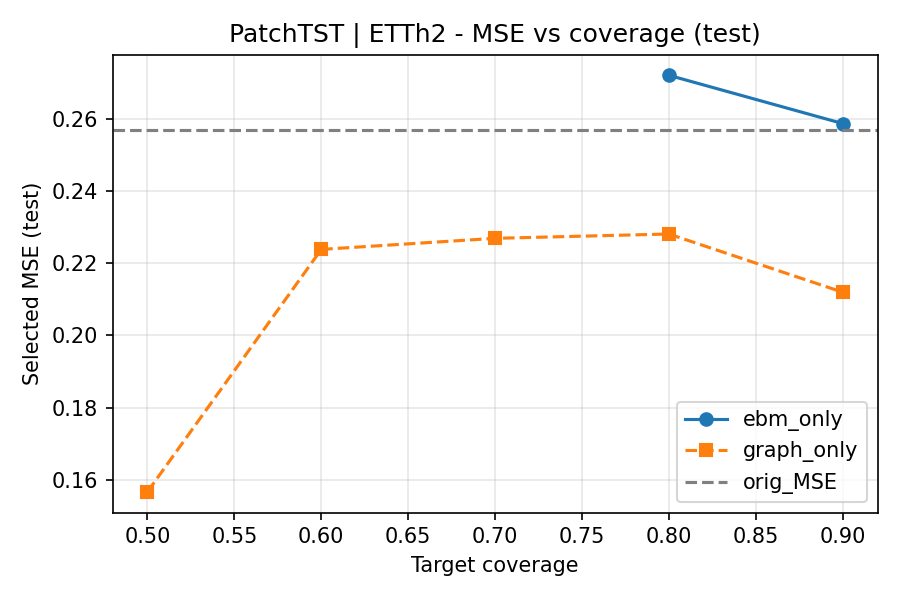}
    \vspace{-0.35em}
    {\small (c) ETTh2}
  \end{minipage}
  \caption{PatchTST: test selective MSE vs.\ target coverage on three datasets, using the same test-split quantile protocol as Section~\ref{sec:experiments}. We compare \tem{}-only (\texttt{ebm\_only}), graph-only (\texttt{graph\_only}), and the full-coverage baseline (\texttt{orig\_MSE}).}
  \label{fig:appendix_backbone_dataset_curves_patchtst}
\end{figure}

\section{Additional baseline comparisons}
\label{sec:appendix_additional_baselines}

\noindent We provide a supplementary comparison to two simple instance-level baselines (MC dropout and an error predictor) on two representative backbones across three datasets. The \tem{} comparison is already reported in the main results.

% Representative comparison to stronger instance-level baselines (main text).
% Notes:
% - Derived from baseline_compare_grid_merged/ebm_graph_fusion_test_metrics.csv
% - "EBM" corresponds to the TEM instance-energy baseline used in our pipeline.
% - Fusion rows are omitted (we do not present fusion as a contribution).

\begin{table}[H]
  \centering
  \footnotesize
  \setlength{\tabcolsep}{3.5pt}
  \caption{%
    \textbf{(Supplementary comparison.)}
    Test selective MSE on retained samples at nominal coverages on the test split (thresholds from test-split quantiles of raw scores; aligned with Table~\ref{tab:selective_mse_main}; lower is better).
    We compare our \textbf{graph-only} structural gate to \textbf{MC dropout} and an \textbf{error predictor} (instance-level baselines) on two representative backbones across three datasets (the \tem{} comparison is reported in the main table).
  }
  \label{tab:instance_baselines_main}
\vspace{0.2em}
  \begin{minipage}[t]{0.49\linewidth}
    \centering
    \textbf{PatchTST}\par\smallskip
    \begin{tabular}{@{}llrrr@{}}
      \toprule
      \textbf{Dataset} & \textbf{Method} & \textbf{50\%} & \textbf{70\%} & \textbf{90\%} \\
      \midrule
      \multirow{3}{*}{ETTh2}
        & Graph (ours) & \textbf{0.195} & \textbf{0.202} & \textbf{0.236} \\
        & Error predictor & 0.226 & 0.245 & 0.251 \\
        & MC dropout & 0.199 & 0.214 & 0.239 \\
      \addlinespace[0.15em]
      \multirow{3}{*}{Electricity}
        & Graph (ours) & \textbf{0.172} & \textbf{0.180} & \textbf{0.198} \\
        & Error predictor & 0.183 & 0.185 & 0.208 \\
        & MC dropout & 0.174 & 0.187 & 0.202 \\
      \addlinespace[0.15em]
      \multirow{3}{*}{Weather}
        & Graph (ours) & \textbf{0.134} & \textbf{0.138} & \textbf{0.155} \\
        & Error predictor & 0.140 & 0.149 & 0.156 \\
        & MC dropout & 0.158 & 0.160 & 0.160 \\
      \bottomrule
    \end{tabular}
  \end{minipage}\hfill
  \begin{minipage}[t]{0.49\linewidth}
    \centering
    \textbf{Autoformer}\par\smallskip
    \begin{tabular}{@{}llrrr@{}}
      \toprule
      \textbf{Dataset} & \textbf{Method} & \textbf{50\%} & \textbf{70\%} & \textbf{90\%} \\
      \midrule
      \multirow{3}{*}{ETTh2}
        & Graph (ours) & \textbf{0.245} & \textbf{0.269} & \textbf{0.291} \\
        & Error predictor & 0.297 & 0.290 & 0.304 \\
        & MC dropout & 0.293 & 0.296 & 0.296 \\
      \addlinespace[0.15em]
      \multirow{3}{*}{Electricity}
        & Graph (ours) & \textbf{0.263} & \textbf{0.276} & \textbf{0.286} \\
        & Error predictor & 0.281 & 0.284 & 0.287 \\
        & MC dropout & 0.279 & 0.284 & 0.288 \\
      \addlinespace[0.15em]
      \multirow{3}{*}{Weather}
        & Graph (ours) & \textbf{0.190} & \textbf{0.201} & \textbf{0.224} \\
        & Error predictor & 0.214 & 0.220 & 0.227 \\
        & MC dropout & 0.196 & 0.205 & 0.226 \\
      \bottomrule
    \end{tabular}
  \end{minipage}
\end{table}

\section{Theoretical analysis}
\label{sec:appendix_theory}

\noindent This appendix connects the implemented structural score to selective forecasting: learned relations $\rightarrow$ structural energy $\rightarrow$ forecast-error certificates $\rightarrow$ score ranking $\rightarrow$ selective risk. All statements analyze the energy in Eq.~\eqref{eq:estruct} without modifying the algorithm.

\subsection{Symmetrized structural energy}
\label{sec:appendix_theory_sym}

For $Z\in\R^{H\times D}$ (horizon $\times$ variables), write $E_{\struct}(Z,A)$ as in Eq.~\eqref{eq:estruct} with $\mathrm{dist}_{i,j}(Z)=\|Z_{:,i}-Z_{:,j}\|_2^2$. Define symmetrized weights $W=\tfrac12(A+A^\top)$.

\begin{lemma}[Symmetrization invariance]
\label{lem:symmetrization}
For any nonnegative $A$ and trajectory $Z$, $E_{\struct}(Z,A)=E_{\struct}(Z,W)$.
\end{lemma}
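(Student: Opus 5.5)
The plan is to exploit the symmetry of the pairwise distance and then re-index the double sum. Recall that $\mathrm{dist}_{i,j}(Z)=\|Z_{:,i}-Z_{:,j}\|_2^2$, so $\mathrm{dist}_{i,j}(Z)=\mathrm{dist}_{j,i}(Z)$ for all $i,j$. Writing $S_A=\sum_{i,j}A_{ij}\,\mathrm{dist}_{i,j}(Z)$, we have $E_{\struct}(Z,A)=S_A/D^2$. Since $E_{\struct}(Z,\cdot)$ is linear in its second argument, it suffices to show $S_{A^\top}=S_A$. Then
\[
E_{\struct}(Z,W)=\tfrac12\bigl(E_{\struct}(Z,A)+E_{\struct}(Z,A^\top)\bigr)=E_{\struct}(Z,A).
\]

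For the key step, I will swap the summation indices $i\leftrightarrow j$ in $S_{A^\top}=\sum_{i,j}A_{ji}\,\mathrm{dist}_{i,j}(Z)$. This gives $\sum_{i,j}A_{ij}\,\mathrm{dist}_{j,i}(Z)$, and symmetry of $\mathrm{dist}$ turns it into $S_A$. This re-indexing is valid because the index set $\{1,\dots,D\}^2$ is finite and is mapped bijectively onto itself.

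There is no real obstacle here. The only point worth flagging is that nonnegativity of $A$ is not needed for the identity itself; it only ensures that $W$ is again a valid nonnegative weight matrix, as later results such as Theorem~\ref{thm:error_certificate} require when they use $L_W$. The diagonal terms contribute zero regardless, since $\mathrm{dist}_{i,i}(Z)=0$.
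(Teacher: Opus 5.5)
Your proposal is correct and is essentially the paper's argument: both use $\mathrm{dist}_{ij}=\mathrm{dist}_{ji}$ and re-index the double sum to get $\sum_{i,j}A_{ij}\mathrm{dist}_{ij}=\tfrac12\sum_{i,j}(A_{ij}+A_{ji})\mathrm{dist}_{ij}$, and the $1/D^2$ factor is unaffected. Your side remark is also right: nonnegativity of $A$ is not needed for the identity itself, only for $W$ to serve as a valid weight matrix in later results.
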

\begin{proof}
Since $\mathrm{dist}_{ij}=\mathrm{dist}_{ji}$,
$\sum_{i,j}A_{ij}\mathrm{dist}_{ij}=\tfrac12\sum_{i,j}(A_{ij}+A_{ji})\mathrm{dist}_{ij}=\sum_{i,j}W_{ij}\mathrm{dist}_{ij}$.
Scaling by $1/D^2$ is unchanged.
\end{proof}

For analysis, let $D_W=\mathrm{diag}(W\mathbf{1})$ and $L_W=D_W-W\succeq 0$. Then $\|Z\|_W\coloneqq\sqrt{E_{\struct}(Z,A)}$ is a seminorm on $\R^{H\times D}$. If $W$ is connected on variables, $\|Z\|_W=0$ exactly when all columns of $Z$ coincide (trajectories constant across variables); restricted to $\{Z\in\R^{H\times D}: Z\mathbf{1}_D=\mathbf{0}\}$, $\|\cdot\|_W$ is a norm. Theorem~\ref{thm:error_certificate} uses the seminorm triangle inequality on $\R^{H\times D}$; test-time scores are unchanged.

\subsection{Structural energy as an error certificate}
\label{sec:appendix_theory_energy}

\paragraph{Assumption 1 (graph-smooth ground truth).}
For ground-truth future $Y$, $E_{\struct}(Y,A)\le \epsilon$ for the (learned) adjacency $A$ used at scoring time.

\begin{theorem}[Structural-energy error certificate]
\label{thm:error_certificate}
Assume $\lambda_{\max}(L_W)>0$. For any forecast $\hat{Y}$,
\[
\|\hat{Y}-Y\|_F^2
\;\ge\;
\frac{D^2}{2\lambda_{\max}(L_W)}
\left[
\sqrt{E_{\struct}(\hat{Y},A)}-\sqrt{\epsilon}
\right]_+^{2},
\qquad [x]_+=\max(x,0).
\]
\end{theorem}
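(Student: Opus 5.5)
The plan is to write the structural energy as a quadratic form in the graph Laplacian $L_W$, use the resulting seminorm triangle inequality to separate $\hat{Y}$ into $Y$ plus the error $\hat{Y}-Y$, and then bound the energy of the error by its Frobenius norm via $\lambda_{\max}(L_W)$.

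\textbf{Step 1 (Laplacian form).} By Lemma~\ref{lem:symmetrization}, replace $A$ by $W=\tfrac12(A+A^\top)$, which is symmetric and nonnegative because $A$ comes from a row softmax. For $Z\in\R^{H\times D}$ with columns $z_i=Z_{:,i}$, expand $\|z_i-z_j\|_2^2=\|z_i\|^2+\|z_j\|^2-2z_i^\top z_j$. Summing against $W_{ij}$ gives the standard identity
\[
\sum_{i,j}W_{ij}\|z_i-z_j\|_2^2 = 2\,\mathrm{tr}(Z L_W Z^\top).
\]
Hence $E_{\struct}(Z,A)=\frac{2}{D^2}\mathrm{tr}(ZL_WZ^\top)=\frac{2}{D^2}\sum_{h}Z_{h,:}L_W Z_{h,:}^\top$.

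\textbf{Step 2 (seminorm and spectral bound).} Since $L_W\succeq 0$, $Z\mapsto\sqrt{\mathrm{tr}(ZL_WZ^\top)}=\|Z L_W^{1/2}\|_F$ is a seminorm. Therefore $\|Z\|_W=\sqrt{E_{\struct}(Z,A)}$ is a seminorm, as already noted after Lemma~\ref{lem:symmetrization}, and it satisfies the triangle inequality. Applying the Rayleigh quotient bound row by row gives
\[
E_{\struct}(Z,A)\le \frac{2\lambda_{\max}(L_W)}{D^2}\|Z\|_F^2 .
\]

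\textbf{Step 3 (combine).} Write $\hat{Y}=(\hat{Y}-Y)+Y$ and apply the triangle inequality together with Assumption~1:
\[
\sqrt{E_{\struct}(\hat{Y},A)}\le \sqrt{E_{\struct}(\hat{Y}-Y,A)}+\sqrt{\epsilon}\le \frac{\sqrt{2\lambda_{\max}(L_W)}}{D}\|\hat{Y}-Y\|_F+\sqrt{\epsilon}.
\]
Since $\lambda_{\max}(L_W)>0$, we can rearrange to
\[
\|\hat{Y}-Y\|_F\ge \frac{D}{\sqrt{2\lambda_{\max}(L_W)}}\Big(\sqrt{E_{\struct}(\hat{Y},A)}-\sqrt{\epsilon}\Big).
\]
The left side is nonnegative, so the right side may be replaced by its positive part. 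Squaring then gives exactly the stated constant $D^2/(2\lambda_{\max}(L_W))$. For the main-text form \eqref{eq:cert_eb_main}, one would additionally divide by $HD$ using $e_b=\|\hat{Y}_b-Y_b\|_F^2/(HD)$.

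\textbf{Main obstacle.} The only delicate point is getting the constants right in the Laplacian identity, in particular the factor $2$ produced by the double sum over ordered pairs, since this factor fixes the $D^2/2$ in the bound. I would check it directly by expanding $\sum_{i,j}W_{ij}\|z_i-z_j\|^2$ using symmetry of $W$. The remaining steps are routine.
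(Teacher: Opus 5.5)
Your proposal is correct and follows the paper's proof essentially step for step: the seminorm triangle inequality combined with Assumption~1, then the Laplacian quadratic-form bound $E_{\struct}(R,A)\le 2\lambda_{\max}(L_W)\|R\|_F^2/D^2$ on the residual $R=\hat{Y}-Y$. The only difference is that you write out the identity $\sum_{i,j}W_{ij}\|z_i-z_j\|_2^2=2\,\mathrm{tr}(ZL_WZ^\top)$ and its row-wise Rayleigh bound explicitly, which the paper compresses into ``pairwise expansion and $L_W\succeq 0$.''
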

\begin{proof}
By the triangle inequality for $\|\cdot\|_W$,
$\|\hat{Y}-Y\|_W\ge\big[\|\hat{Y}\|_W-\|Y\|_W\big]_+$.
Squaring and using $E_{\struct}(Y,A)\le\epsilon$ gives
$E_{\struct}(\hat{Y}-Y,A)\ge\big[\sqrt{E_{\struct}(\hat{Y},A)}-\sqrt{\epsilon}\big]_+^2$.
For residual $R=\hat{Y}-Y$, pairwise expansion and $L_W\succeq 0$ yield
$E_{\struct}(R,A)=\tfrac{1}{D^2}\sum_{ij}W_{ij}\|R_{:,i}-R_{:,j}\|_2^2
\le \tfrac{2\lambda_{\max}(L_W)}{D^2}\|R\|_F^2$.
Combine the two displays.
\end{proof}

\paragraph{Corollary 1 (per-sample selective error).}
With $e_b=\|\hat{Y}_b-Y_b\|_F^2/(HD)$ (Eq.~\eqref{eq:mse}), Theorem~\ref{thm:error_certificate} is equivalent to Eq.~\eqref{eq:cert_eb_main} in Section~\ref{sec:method} (divide the Frobenius bound by $HD$). If $E_{\struct}(Y,A)=0$, then $e_b\ge \tfrac{D}{2H\lambda_{\max}(L_W)}E_{\struct}(\hat{Y},A)$: large structural energy is a \emph{one-sided certificate} on the same error used in selective risk, not an exact MSE predictor.

\subsection{Selective ranking and excess risk}
\label{sec:appendix_theory_ranking}

On a test batch of size $B$, let per-sample errors $e_1,\ldots,e_B$ (Eq.~\eqref{eq:mse}) and target coverage $c\in(0,1]$, $k=\lfloor cB\rfloor$.
For abstention score $S=(S_1,\ldots,S_B)$, let $\mathcal{S}_S(k)$ be the $k$ samples with smallest scores and
$R_S(k)=\frac{1}{k}\sum_{b\in\mathcal{S}_S(k)} e_b$.
Let $\mathcal{S}^*(k)\in\arg\min_{|\mathcal{T}|=k}\sum_{b\in\mathcal{T}} e_b$ and $R^*(k)=\frac{1}{k}\sum_{b\in\mathcal{S}^*(k)} e_b$.

\begin{theorem}[Oracle ranking optimality]
\label{thm:oracle_ranking}
For every score $S$ and every $k$, $R_S(k)\ge R^*(k)$. If $e_i<e_j\Rightarrow S_i<S_j$ for all $i,j$, then $R_S(k)=R^*(k)$ for all $k$.
\end{theorem}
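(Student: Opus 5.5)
The first claim follows directly from the definition of $R^*(k)$. For any score $S$, the acceptance set $\mathcal{S}_S(k)$ is some subset $\mathcal{T}\subseteq\{1,\ldots,B\}$ with $|\mathcal{T}|=k$; ties in $S$ are broken by any fixed rule, so the set is well defined. Since $\mathcal{S}^*(k)$ minimizes $\sum_{b\in\mathcal{T}}e_b$ over all such subsets, we get $\sum_{b\in\mathcal{S}_S(k)}e_b\ge\sum_{b\in\mathcal{S}^*(k)}e_b$. Dividing by $k>0$ gives $R_S(k)\ge R^*(k)$. If $k=0$, both sides are undefined or vacuous and we exclude that case. No property of $S$ is used here.

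For the second claim, we first characterize the minimizers. The minimum of $\sum_{b\in\mathcal{T}}e_b$ over $|\mathcal{T}|=k$ equals the sum of the $k$ smallest order statistics $e_{(1)}+\cdots+e_{(k)}$. We prove this by an exchange argument. Suppose $\mathcal{T}$ contains some $j$ and omits some $i$ with $e_i<e_j$. Swapping $j$ for $i$ strictly decreases the sum. Hence every minimizer, and every set attaining the order-statistic sum, has the \emph{threshold property}: no excluded element has error strictly smaller than an included one. Conversely, every $k$-set with the threshold property has sum $e_{(1)}+\cdots+e_{(k)}$. Such a set must contain all elements with error below the $k$-th order value $e_{(k)}$, and fill the remaining slots with elements of error exactly $e_{(k)}$. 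The resulting multiset of selected error values is therefore the same for every such set.

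It then suffices to show that $\mathcal{S}_S(k)$ has the threshold property when $e_i<e_j\Rightarrow S_i<S_j$. Suppose $j\in\mathcal{S}_S(k)$, $i\notin\mathcal{S}_S(k)$, and $e_i<e_j$. The hypothesis gives $S_i<S_j$. But $\mathcal{S}_S(k)$ consists of the $k$ smallest scores, so including $j$ while excluding $i$ forces $S_j\le S_i$, a contradiction. Note that $S_i<S_j$ is strict, so this holds under any tie-breaking rule. Therefore $\mathcal{S}_S(k)$ attains the minimal sum, and $R_S(k)=R^*(k)$ for every $k$.

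The main obstacle is the handling of ties. Ties in $e$ make $\mathcal{S}^*(k)$ non-unique, and ties in $S$ make $\mathcal{S}_S(k)$ depend on the tie-breaking rule. The order-statistic characterization handles the first issue because it compares sums rather than sets. The strictness of the implication $e_i<e_j\Rightarrow S_i<S_j$ handles the second, since it rules out any score tie between elements whose errors differ. I would state both points explicitly in the proof.
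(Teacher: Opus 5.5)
Your proof is correct and follows the paper's route. The first claim is immediate because $\mathcal{S}^*(k)$ minimizes over all $k$-subsets, and the second shows that strict order-consistency makes the lowest-$k$-score set an error-minimizing $k$-set. Your exchange/threshold argument is slightly more careful than the paper's: the paper asserts $\mathcal{S}_S(k)=\mathcal{S}^*(k)$, which holds only up to the choice of minimizer when errors tie at the $k$-th order statistic, whereas you show equality of the sums, which is all that $R_S(k)=R^*(k)$ requires.
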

\begin{proof}
Among $k$-subsets, $\mathcal{S}^*(k)$ minimizes $\sum_{b\in\mathcal{T}}e_b$ by definition.
If $S$ is strictly order-consistent with $e$, the lowest-$k$ scores coincide with the lowest-$k$ errors, so $\mathcal{S}_S(k)=\mathcal{S}^*(k)$.
\end{proof}

\begin{theorem}[Excess selective risk under disagreement]
\label{thm:excess_risk}
Assume $0\le e_b\le E_{\max}$ for all $b$. Let $m_k=|\mathcal{S}_S(k)\setminus\mathcal{S}^*(k)|$ and $\delta_k=m_k/k$. Then $0\le R_S(k)-R^*(k)\le E_{\max}\delta_k$.
\end{theorem}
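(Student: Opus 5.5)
The lower bound $R_S(k)-R^*(k)\ge 0$ is exactly Theorem~\ref{thm:oracle_ranking}, so I will invoke it directly. The work is the upper bound, which I will get by an exchange argument comparing $\mathcal{S}_S(k)$ and $\mathcal{S}^*(k)$ element by element. Both are $k$-subsets, so I write $\mathcal{S}_S(k)=C\cup P$ and $\mathcal{S}^*(k)=C\cup Q$. Here $C=\mathcal{S}_S(k)\cap\mathcal{S}^*(k)$, $P=\mathcal{S}_S(k)\setminus\mathcal{S}^*(k)$ and $Q=\mathcal{S}^*(k)\setminus\mathcal{S}_S(k)$. Since the two sets have equal cardinality, $|P|=|Q|=m_k$.

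The shared part $C$ cancels in the difference, leaving
\[
k\big(R_S(k)-R^*(k)\big)=\sum_{b\in P}e_b-\sum_{b\in Q}e_b .
\]
Next I bound each piece using $0\le e_b\le E_{\max}$. The first sum is at most $m_kE_{\max}$. The second sum is at least $0$, so subtracting it can only lower the total. Hence the right-hand side is at most $m_kE_{\max}$. Dividing by $k$ gives $R_S(k)-R^*(k)\le E_{\max}m_k/k=E_{\max}\delta_k$.

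I do not expect any step to be a real obstacle. The one point needing care is ties in $e$, where $\mathcal{S}^*(k)$ is not unique. The argument holds for whichever minimizer is fixed in the definition of $m_k$, since it uses only $|\mathcal{S}^*(k)|=k$ and the bounds on $e_b$. A similar remark covers ties in $S$, where $\mathcal{S}_S(k)$ is chosen by any fixed tie-break.

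One could sharpen the bound to $(\max_P e-\min_Q e)\,\delta_k$, but the stated bound needs only the crude range bound.
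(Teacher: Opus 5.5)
Your proposal is correct and follows the paper's proof. The paper likewise takes the two differences $\mathcal{S}_S(k)\setminus\mathcal{S}^*(k)$ and $\mathcal{S}^*(k)\setminus\mathcal{S}_S(k)$, each of size $m_k$, cancels the common samples, and bounds $\sum_{P}e_b-\sum_{Q}e_b\le m_kE_{\max}$, with the lower bound from Theorem~\ref{thm:oracle_ranking}; your remarks on ties and the sharper bound $(\max_P e-\min_Q e)\,\delta_k$ are also valid.
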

\begin{proof}
Let $F=\mathcal{S}_S(k)\setminus\mathcal{S}^*(k)$ and $M=\mathcal{S}^*(k)\setminus\mathcal{S}_S(k)$; $|F|=|M|=m_k$.
Canceling common samples,
$R_S(k)-R^*(k)=\frac{1}{k}(\sum_{i\in F}e_i-\sum_{j\in M}e_j)\le \frac{m_k E_{\max}}{k}$.
\end{proof}

Theorem~\ref{thm:oracle_ranking} matches coverage-constrained selective forecasting (Section~\ref{sec:experiments}): under fixed $\rho$, we sort raw scores and retain the lowest fraction.
Theorem~\ref{thm:excess_risk} bounds suboptimality by the fraction of mis-ranked acceptances at size $k$ (finite-batch, empirical statement).

\subsection{Alignment objective and implementation}
\label{sec:appendix_theory_alignment}
\label{sec:appendix_alignment_impl}

Training uses $\alignloss=-r(s,t)$ with $s_b=E_{\struct}(\hat{Y}_b,A)$ and default $t_b=\log(1+e_b)$.
The Pearson objective does \emph{not} directly optimize permutation agreement; it is a smooth surrogate encouraging positive score--risk association.

\begin{lemma}[Monotone target]
\label{lem:monotone_target}
For $t(e)=\log(1+e)$, $e_i<e_j\iff t(e_i)<t(e_j)$; oracle error ranking is unchanged by the log target.
\end{lemma}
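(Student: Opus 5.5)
The plan is to reduce the claim to strict monotonicity of $t(e)=\log(1+e)$ on the domain where errors live. By Eq.~\eqref{eq:mse}, each $e_b$ is an average of squares, so $e_b\ge 0$. In particular $1+e_b\ge 1>0$, and $t$ is well defined and real-valued on $[0,\infty)$, indeed on $(-1,\infty)$.

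\textbf{Step 1: strict monotonicity.} On $(-1,\infty)$ the map $e\mapsto 1+e$ is strictly increasing and $\log$ is strictly increasing on $(0,\infty)$. Their composition is therefore strictly increasing. Alternatively, $t'(e)=1/(1+e)>0$, and the mean value theorem gives the same conclusion. Either route yields the forward implication: $e_i<e_j\Rightarrow t(e_i)<t(e_j)$.

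\textbf{Step 2: the converse.} Suppose $t(e_i)<t(e_j)$. If $e_i\ge e_j$, then monotonicity gives $t(e_i)\ge t(e_j)$, a contradiction. Hence $e_i<e_j$, and the equivalence is established. The same argument shows ties are preserved: $e_i=e_j\iff t(e_i)=t(e_j)$, since a strictly increasing map is injective.

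\textbf{Step 3: the ranking statement.} Sorting $\{e_b\}$ and sorting $\{t(e_b)\}$ give identical orderings, with ties preserved. So for every $k$, the family of minimizing $k$-subsets $\arg\min_{|\mathcal{T}|=k}\sum_{b\in\mathcal{T}}e_b$ coincides with the set of $k$ lowest-ranked samples under $t$. I would justify this by an exchange argument: any $k$-subset minimizing $\sum e_b$ consists of $k$ smallest errors. The oracle set $\mathcal{S}^*(k)$ of Theorem~\ref{thm:oracle_ranking} is therefore unchanged if $t(e_b)$ replaces $e_b$ as the ranking key. Consequently, any score $S$ that is strictly order-consistent with $t$ is also strictly order-consistent with $e$, and Theorem~\ref{thm:oracle_ranking} applies to it.

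I expect no real obstacle. The only point needing care is making explicit that $e_b\ge 0$ keeps the argument of $\log$ positive, and that "oracle ranking unchanged" is interpreted as equality of the order relations, which is exactly what the equivalence gives.
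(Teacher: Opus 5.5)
Your proof is correct and takes the intended route. The paper states this lemma without a proof, relying implicitly on the strict monotonicity of $\log(1+e)$ on $e\ge 0$; your argument makes that explicit and also spells out the tie-preservation and oracle-subset consequences used with Theorem~\ref{thm:oracle_ranking}.
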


\begin{proposition}[Pairwise ranking stability under approximate alignment]
\label{prop:pairwise_stability}
Suppose at evaluation time $t_b=\alpha s_b+\beta+\xi_b$ with $\alpha>0$ and $|\xi_b|\le\eta$.
If $s_j-s_i>2\eta/\alpha$, then $t_j>t_i$ (hence $e_j>e_i$ when $t=\log(1+e)$ and the map is invertible on the observed range).
\end{proposition}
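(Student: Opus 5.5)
The plan is a direct two-line computation from the alignment model, followed by an appeal to Lemma~\ref{lem:monotone_target} to pass from $t$ to $e$. First I would write the difference of targets using the assumed relation $t_b=\alpha s_b+\beta+\xi_b$:
\[
t_j-t_i=\alpha(s_j-s_i)+(\xi_j-\xi_i).
\]
The intercept $\beta$ cancels. Since $|\xi_i|,|\xi_j|\le\eta$, the noise term satisfies $\xi_j-\xi_i\ge-2\eta$. Because $\alpha>0$, the hypothesis $s_j-s_i>2\eta/\alpha$ gives $\alpha(s_j-s_i)>2\eta$. Adding these yields $t_j-t_i>2\eta-2\eta=0$, so $t_j>t_i$.

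For the parenthetical conclusion, I would invoke Lemma~\ref{lem:monotone_target}. Under the standing convention $e_b\ge0$, the map $e\mapsto\log(1+e)$ is strictly increasing; on the relevant range $e>-1$ it is invertible with inverse $t\mapsto e^{t}-1$. Hence $t_j>t_i$ implies $e_j>e_i$. If the alternative target $t_b=e_b$ is used, the conclusion is immediate.

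The only point needing care, and the closest thing to an obstacle, is the direction of the strict inequality. The noise bound is non-strict ($\le\eta$), so the strict margin must come entirely from the hypothesis $s_j-s_i>2\eta/\alpha$. I would also note explicitly that $\alpha>0$ is what preserves the direction of the inequality when multiplying through; with $\alpha<0$ the ordering would reverse. No earlier theorem beyond Lemma~\ref{lem:monotone_target} is needed.
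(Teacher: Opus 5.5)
Your proof is correct and is the paper's argument: the paper's proof is the one-line bound $t_j-t_i=\alpha(s_j-s_i)+(\xi_j-\xi_i)\ge\alpha(s_j-s_i)-2\eta>0$. It leaves the passage from $t$ to $e$ to the strict monotonicity of $\log(1+e)$ (Lemma~\ref{lem:monotone_target}), just as you do.
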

\begin{proof}
$t_j-t_i=\alpha(s_j-s_i)+(\xi_j-\xi_i)\ge \alpha(s_j-s_i)-2\eta>0$.
\end{proof}

Thus correlation alignment is \emph{consistent with} the ranking objective under an approximately positive affine score--risk relation, but does not guarantee perfect ranking on finite batches.
Differentiable Pearson correlation: with $\tilde{s}_b=s_b-\bar{s}$, $\tilde{t}_b=t_b-\bar{t}$,
\begin{equation}
r(s,t)=\frac{\sum_{b=1}^{B}\tilde{s}_b\,\tilde{t}_b}{\sqrt{\sum_{b=1}^{B}\tilde{s}_b^{2}+\varepsilon}\,\sqrt{\sum_{b=1}^{B}\tilde{t}_b^{2}+\varepsilon}},
\label{eq:pearson_batch}
\end{equation}
with fixed $\varepsilon>0$; gradients flow through $s_b$ into the graph head and forecaster.

\subsection{Structure-blind scoring limits}
\label{sec:appendix_theory_failure}

A \emph{structure-blind} score $S_{\mathrm{marg}}(Z)=\sum_{d=1}^{D}\phi_d(Z_{:,d})$ depends only on per-channel summaries, not on cross-channel gaps.

\begin{proposition}[Marginal scores need not detect relational deviation]
\label{prop:structure_blind}
There exist $Z^{(1)},Z^{(2)}$ and nonnegative $A$ such that $S_{\mathrm{marg}}(Z^{(1)})=S_{\mathrm{marg}}(Z^{(2)})$ while $E_{\struct}(Z^{(1)},A)\neq E_{\struct}(Z^{(2)},A)$.
\end{proposition}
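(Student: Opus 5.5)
The construction is explicit and rests on permutation invariance. Fix $D=3$ and $H=1$, and pick any nonnegative $A$ with $A_{12}>0$ and all other off-diagonal entries zero, for instance $A_{12}=1$. Choose the per-channel summaries $\phi_d$ to be the same function $\phi$ for every $d$. This is allowed, because the statement only needs \emph{some} structure-blind score, or more strongly any score that is symmetric in the channels. Then $S_{\mathrm{marg}}$ is invariant under permuting the columns of $Z$.

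Take $Z^{(1)}=(0,0,1)$ and $Z^{(2)}=(0,1,0)$. These are column permutations of one another, so
\[
S_{\mathrm{marg}}(Z^{(1)})=2\phi(0)+\phi(1)=S_{\mathrm{marg}}(Z^{(2)}).
\]
Now evaluate Eq.~\eqref{eq:estruct}. In $Z^{(1)}$ the connected pair $(1,2)$ has gap $0$, so $E_{\struct}(Z^{(1)},A)=0$. In $Z^{(2)}$ that same pair has squared gap $1$, so $E_{\struct}(Z^{(2)},A)=A_{12}/D^2=1/9\neq 0$. By Lemma~\ref{lem:symmetrization}, asymmetry of $A$ plays no role, and the same values hold with $W$.

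If the statement must hold for arbitrary fixed channel-specific $\phi_d$, the plan changes slightly. We keep each channel's marginal content fixed, meaning each column $Z_{:,d}$ keeps the same multiset of values and hence the same $\phi_d$ value if $\phi_d$ is permutation-invariant in time, and we only alter alignment across channels. Concretely, take $H=2$ with columns $(0,1),(0,1)$ versus $(0,1),(1,0)$. A general $\phi_d$ might not be time-permutation-invariant. In that case the cleanest fallback is to exhibit the constant summary $\phi_d\equiv 0$ as a witness, which trivially makes the two marginal scores equal.

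The main obstacle is therefore interpretive rather than computational. We have to settle which quantifier over $\phi$ is intended. I would present the identical-$\phi$ (channel-permutation) construction as the main proof, since an existential reading only requires one admissible score. I would then remark that the same mechanism, reshuffling channels or time-aligning them differently while preserving per-channel summaries, defeats any marginal score whose summaries are invariant to those shuffles.
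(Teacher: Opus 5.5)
Your main construction is correct, but it uses a different invariance from the paper's. The paper takes $D=2$ with $A_{12}=A_{21}>0$ and changes the \emph{time} alignment of one channel: $Z^{(1)}=[x\;x]$ versus $Z^{(2)}=[x\;Px]$, where $P$ is a horizon permutation that leaves the summaries unchanged (tacitly with $Px\neq x$). Each channel keeps its marginal content, while the cross-channel gap $\|x-Px\|_2^2$ becomes positive. Your $H=2$ fallback, with columns $(0,1),(0,1)$ versus $(0,1),(1,0)$, is exactly this construction with $x=(0,1)$ and $P$ the swap.

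Your primary argument instead permutes \emph{channels} at $H=1$. It needs no temporal invariance of $\phi$ at all. It does need the same $\phi$ in every channel, $D\ge 3$, and a non-uniform $A$. For $D=2$, or when all off-diagonal weights are equal, channel permutations leave $E_{\struct}$ unchanged, so your choice of $D=3$ with only $A_{12}>0$ is essential, not cosmetic.

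The two routes therefore cover different classes of structure-blind scores. The paper's handles heterogeneous, channel-specific $\phi_d$ that are invariant to reordering in time (means, variances, quantiles), and it illustrates loss of synchrony, which is the failure mode motivating the paper. Yours handles arbitrary channel-exchangeable $\phi$, and it shows that such a score cannot tell \emph{which} channels are coupled.

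Your worry about the quantifier is justified, because some hypothesis on $\phi_d$ is unavoidable. At $D=2$, $H=1$, the choice $\phi_1(z)=z$, $\phi_2(z)=-z$ gives $S_{\mathrm{marg}}=z_1-z_2$, which determines $E_{\struct}\propto(z_1-z_2)^2$. The paper's proof likewise just posits $x$ and $P$ with the needed invariance. The $\phi_d\equiv 0$ witness is valid under a purely existential reading, but it makes the proposition nearly vacuous. So lead with the symmetric-$\phi$ construction and present the time-permutation one alongside it.
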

\begin{proof}
Take $D=2$ with $A_{12}=A_{21}>0$. Let $x\in\R^H$ and a permutation $P$ with $\phi_1(x)=\phi_2(x)=\phi_1(Px)=\phi_2(Px)$.
Set $Z^{(1)}=[x\; x]$ and $Z^{(2)}=[x\; Px]$ (columns).
Then $S_{\mathrm{marg}}$ matches, but $E_{\struct}(Z^{(2)},A)\propto\|x-Px\|_2^2$ can exceed $E_{\struct}(Z^{(1)},A)=0$.
\end{proof}

This concerns separable structure-blind rules; \tem{} is our learned instance-level comparator in experiments---we do not assume it equals $S_{\mathrm{marg}}$, but the example motivates a distinct relational abstention axis.

\subsection{Scope and assumptions}
\label{sec:appendix_theory_scope}

\noindent\textbf{A1.} Adjacency entries are nonnegative; analysis uses $W=(A+A^\top)/2$ (Lemma~\ref{lem:symmetrization}).
\textbf{A2.} Theorem~\ref{thm:error_certificate} assumes $\lambda_{\max}(L_W)>0$ and Assumption~1 for the same $A$ deployed at test time (learned $A$ may mis-specify coupling, regime~(iii) in Section~\ref{sec:when_helps}).
\textbf{A3.} Theorems~\ref{thm:oracle_ranking}--\ref{thm:excess_risk} are finite-batch statements for empirical selective risk at nominal $k$.
\textbf{A4.} Proposition~\ref{prop:pairwise_stability} is conditional on post-training approximate alignment, not implied in full generality by minimizing Pearson correlation on training batches.
These results do \emph{not} imply universal superiority of $E_{\struct}$; when ground-truth smoothness is weak or error is predominantly marginal, certificates and relational scores are expected to be less informative, consistent with Section~\ref{sec:when_helps}.

\section{Compute resources}
\label{sec:appendix_compute}

All experiments were run with PyTorch and CUDA on a single NVIDIA L4 GPU (24GB per device). Software versions are pinned in \texttt{environment.yml}; default training schedules follow the project \texttt{README}.

\end{document}